\def\eqref#1{equation~\ref{#1}}
\def\1{\bm{1}}
\def\vo{{\bm{o}}}
\def\vx{{\bm{x}}}
\def\vy{{\bm{y}}}
\DeclareMathAlphabet{\mathsfit}{\encodingdefault}{\sfdefault}{m}{sl}
\SetMathAlphabet{\mathsfit}{bold}{\encodingdefault}{\sfdefault}{bx}{n}
\def\gB{{\mathcal{B}}}
\def\gD{{\mathcal{D}}}
\def\gL{{\mathcal{L}}}
\def\gQ{{\mathcal{Q}}}
\def\gX{{\mathcal{X}}}
\def\gY{{\mathcal{Y}}}
\def\bbE{{\mathbb{E}}}
\title{RLHF in an SFT Way: From Optimal Solution to Reward-Weighted Alignment}
\author{\name Yuhao Du\thanks{Equal Contribution.} \email yuhaodu@link.cuhk.edu.cn \\
      \addr Shenzhen Research Institute of Big Data\\
      The Chinese University of Hong Kong, Shenzhen
      \AND
      \name Zhuo Li\footnotemark[1] \email zhuoli3@link.cuhk.edu.cn \\
      Shenzhen Research Institute of Big Data\\
      The Chinese University of Hong Kong, Shenzhen
      \AND
      \name Pengyu Cheng\footnotemark[1] \email pengyucheng95@gmail.com\\
      \addr Qwen LLM Application Team, Alibaba
      \AND
      \name Zhihong Chen \email zhihongc@stanford.edu \\
      \addr Stanford University
      \AND
      \name Yuejiao Xie \email yuejiaoxie@link.cuhk.edu.cn \\
      \addr The Chinese University of Hong Kong, Shenzhen
      \AND
      \name Xiang Wan \email wanxiang@sribd.cn \\
      \addr Shenzhen Research Institute of Big Data
      \AND
      \name Anningzhe Gao\thanks{Corresponding Author. Code is available at \url{https://github.com/DuYooho/VAR}.} \email anningzhegao@gmail.com \\
      \addr Shenzhen Research Institute of Big Data
}
\newtheorem{theorem}{Theorem}
\begin{document}

\maketitle

\begin{abstract}
Reinforcement Learning from Human Feedback (RLHF) is crucial for aligning Large Language Models (LLMs) with human values.
However, RLHF has been continuously challenged by its high complexity in implementation and computation consumption, specifically for \textit{online} sampling-based methods like Proximal Policy Optimization (PPO) and Group Relative Policy Optimization (GRPO).
Even with recent simplifications, such as Direct Preference Optimization (DPO) that designs an \textit{offline} implicit reward learning objective relying on pre-collected preference datasets, the problems of over-fitting and training instability remain hindering the alignment process from the expected optimal performance.
To address the existing challenges, we propose a novel simplification of RLHF from the perspective of variational inference, called \textbf{V}ariational \textbf{A}lignment with \textbf{R}e-weighting (\textbf{VAR}). Specifically, by directly minimizing the distribution gap between the learning LLM policy and the optimal solution of RLHF, we transform the alignment objective into an \textit{offline} reward-driven re-weighted supervised fine-tuning (SFT) form, which only requires minor adjustment on the SFT loss to obtain noticeable improvement on training stability and effectiveness. 
In comprehensive evaluation benchmarks, our objective empowers LLMs to outperform offline alignments, demonstrating superior performance in both helpfulness and harmlessness metrics (avg. $\uparrow7.16\%$ than DPO). Meanwhile, when compared to online sampling methods, our method is also comparable even better while significantly reducing computational overhead and accelerating convergence speed (over $5\times$ faster than GRPO), suggesting our approach as an efficient and effective solution in bridging the gap between efficiency and performance in LLM alignment.
\end{abstract}

\section{Introduction}\label{sec:introduction}

Large Language Models (LLMs)~\citep{openai2024gpt4technicalreport,touvron2023llamaopenefficientfoundation,yang2024qwen2} have achieved remarkable success in extensive applications of artificial intelligence (AI), including dialogue generation~\citep{abdullin2024syntheticdialoguedatasetgeneration}, summarization~\citep{hu2022graphenhancedcontrastivelearning}, coding~\citep{cobbe2021training,shao2024deepseekmathpushinglimitsmathematical},  logical reasoning~\citep{suzgun2022challenging}, AI agents~\citep{wu2023autogen,hu2025emotionrecognitionmultiturnmultimodal,hu2025agentmentalinteractivemultiagentframework,dai2025psycher1reliablepsychologicalllms}, and data curation~\citep{du2025atoxiaredteaminglargelanguage,li2024selfinstructedderivedpromptgeneration,li2025addoneinincrementalsampleselection}. Among the diverse LLM training techniques, Reinforcement Learning from Human Feedback (RLHF) plays a core role in ensuring the LLM generation is helpful and harmless~\citep{ouyang2022traininglanguagemodelsfollow, rafailov2024direct}.

In particular, RLHF first learns a reward model (RM)~\citep{bradley1952rank} from annotated human preferences, then trains LLMs within a reinforcement learning (RL) scheme via Proximal Policy Optimization (PPO)~\citep{schulman2017proximalpolicyoptimizationalgorithms} to optimize the expected rewards from the learned RM~\citep{ouyang2022traininglanguagemodelsfollow}. 
Although recognized as the mainstream solution to LLM alignment~\citep{ouyang2022traininglanguagemodelsfollow, shao2024deepseekmathpushinglimitsmathematical, touvron2023llama, openai2024gpt4technicalreport, yang2024qwen2}, RLHF remains being challenged because of its expensive computational resource consumption~\citep{cheng2023adversarial, yuan2023rrhf} and complicated implementation~\citep{ouyang2022traininglanguagemodelsfollow,shao2024deepseekmathpushinglimitsmathematical,yuan2023rrhf} in which multiple models (\textit{e.g.} the learning policy, the reference, the critic model, and the reward model) are required to cooperate in the online RL training scheme.
Moreover, incorporating such a complicated pipeline significantly induces training complexity and instability, leading to the difficulty of training convergence and the high risk of collapse~\citep{song2023reward,go2023aligning}. Although methods like GRPO~\citep{shao2024deepseekmathpushinglimitsmathematical} forego the critic model, instead estimating the baseline from group scores and significantly reducing training resources, its online sampling strategy still challenges the practical implementation and training speed.

Towards more stable training than online alignment, several ranking-based offline alternatives are proposed, primarily from the perspective of enlarging the likelihood margin between preferred and rejected response pairs in a contrastive approach. Direct Preference Optimization (DPO)~\citep{rafailov2024direct} implicitly maximizes the difference in sampling probabilities between good and bad answers. 
Ethayarajh et al.~\citep{ethayarajh2024kto} introduces Kahneman-Tversky Optimization (KTO) to directly maximize the utility of generations instead of maximizing the log-likelihood of preferences. While effective, these methods usually rely on the collection of preferred / rejected response pairs with high quality, which introduces a substitution data collection consumption. Instead, Advantage Leftover Launch (A-LoL)~\citep{bahetileftover} formulates the RL process at the sequence level and derives an advantage-based offline objective that exclusively utilizes preferred responses to achieve human-aligned results. However, it still relies on clipping the importance weights to ensure training stability, which prevents the optimization from reaching the true RLHF optima. 
Furthermore, approaches like DPO and ALoL could employ negative weights for potential dis-preferred responses, leading to an unstable training process due to the unbounded nature of loss landscape when negative weights are applied~\citep{pal2024smaug,yan20243d}.

In this paper, we address these limitations by proposing a reward-driven variational offline alignment framework that eliminates the need for online sampling and clipping, and avoids the instability introduced by negative weights. Our approach reformulates RLHF as a variational inference problem over positive measures, ensuring a stable and well-defined optimization landscape. Specifically, starting from the closed-form optimal solution of RLHF, we directly minimize the Kullback–Leibler (KL) divergence~\citep{kullback1951information} between the to-be-learned LLM and its optimal solution. The resulting loss function takes the form of an offline reward-driven weighted supervised fine-tuning (SFT) loss, where non-negative weights are derived through an exponential reward transformation. Furthermore, we introduce an efficient in-batch normalization technique to approximate the normalization term, enabling scalable and practical implementation. Experimental results demonstrate that our framework outperforms existing methods in both stability and alignment performance, providing a robust alternative to the challenges of RLHF.

\section{Preliminary}

\paragraph{Reinforcement Learning from Human Feedback} is an essential approach to alignment LLMs with human values, especially from the perspectives of helpfulness and harmlessness~\citep{ouyang2022traininglanguagemodelsfollow}. 
We denote the input query to LLMs as $\vx \in \gX$, and the model response $\vy \in \gY$, where $\gX, \gY$ are discrete enumerable sample spaces containing natural language sequences. 
RLHF first learns a reward model $r(\vx,\vy)$ from a given collection of human preference data $\mathcal{D}_\text{p}=\{(\vx,\vy_w,\vy_l)\}$, where $\vx$ is a user input prompt, $\vy_w, \vy_l$ are the preferred and rejected responses selected by annotators, respectively. To learn a representative RM, following Bradley-Terry~\citep{bradley1952rank} objective is usually utilized:
\begin{equation}\label{eq:bt-ranking-loss}
     -\mathbb{E}_{(\vx,\vy_w, \vy_l)\sim \gD_\text{p}} \Big[\log \sigma (r(\vx,\vy_w)-r(\vx,\vy_l))\Big],
\end{equation}
where $\sigma(\cdot)$ is the Sigmoid function.
Intuitively, Equation~\ref{eq:bt-ranking-loss} induces $r(\vx,\vy)$ to assign a higher reward score to the preferred response $\vy_w$ than the rejected response $\vy_l$ with respect to input $\vx$.

With a learned RM $r(\vx,\vy)$, RLHF optimizes the target LLM policy $\pi_\theta(\vy|\vx)$ by maximizing the expected reward:
\begin{equation}
\mathbb{E}_{\vx\sim\mathcal{D}, \vy\sim\pi_{\theta}(\vy|\vx)}
[r(\vx,\vy) ]- \beta \text{KL}[\pi_{\theta} \Vert\pi_{\text{ref}}],
\label{eq:RLHF}
\end{equation}
where $\text{KL}[\pi_{\theta} \Vert\pi_{\text{ref}}]$ is the KL divergence~\citep{kullback1951information} between the training policy $\pi_\theta(\vy|\vx)$ with a reference model $\pi_\text{ref}(\vy|\vx)$ to prevent $\pi_{\theta}(\vy|\vx)$ from the degeneration and preserve the generation diversity. $\beta>0$ is a hyper-parameter to re-weight the expected reward and the KL regularization term.

To solve the RLHF objective in Equation~\ref{eq:RLHF}, 
 Proximal Policy Optimization (PPO)~\citep{schulman2017proximalpolicyoptimizationalgorithms} has been recognized as the mainstream optimization algorithm~\citep{rafailov2024direct}. However, as mentioned in Section~\ref{sec:introduction}, PPO suffers from training instability and high complexity in computation and implementation~\citep{yuan2023rrhf,cheng2023adversarial}. Therefore, many of recent works have been proposed to simplify and improve the original PPO algorithm. 
Rafailov et al.~\citep{rafailov2024direct} theoretically demonstrate that Equation~\ref{eq:RLHF} has a closed-form solution:
\begin{equation}    \label{eq:optimal_solution}
    \pi^*(\vy|\vx) = \frac{1}{Z(\vx)} \pi_\text{ref}(\vy|\vx)\exp\left(\frac{1}{\beta} r(\vx,\vy)\right),
\end{equation}
where $Z(\vx) = \mathbb{E}_{\vy\sim\pi_{\text{ref}}(\vy|\vx)}[\text{exp}(\frac{1}{\beta}r(\vx,\vy))]$ 
is the denominator that normalizes the conditional distribution.
Based on the relation between the optimal policy $\pi^*(\vy|\vx)$ and the RM $r(\vx,\vy)$, Rafailov et al.~\citep{rafailov2024direct} convert the RM learning objective~Equation~\ref{eq:bt-ranking-loss} to an optimal policy learning loss named Direct Preference Optimization (DPO):
\begin{equation}\label{eq:DPO}
- \mathbb{E}_{\gD_\text{p}}\Big[\log\sigma\Big( \beta\log \frac{\pi_\theta(\vy_{w}|\vx)}{\pi_\text{ref}(\vy_{w}|\vx)} - \beta \log \frac{\pi_\theta(\vy_{l}|\vx)}{\pi_\text{ref}(\vy_{l}|\vx)} \Big)\Big].
\end{equation}
Baheti et al.~\citep{bahetileftover} adopt the PPO objective into an offline scheme by using importance sampling and converting the expectation of $\pi_\theta(\vy|\vx)$ to the expectation of $\pi_\text{ref}(\vy|\vx)$, then propose Advantage-Leftover-Lunch (A-LoL) gradient estimation: 
\begin{equation}\label{eq:a-lol}
 - \bbE_{\vx \sim\gD, \vy\sim \pi_\text{ref}(\vy|\vx)} \Big[
    \hat{A}^{\pi_\text{ref}}\cdot \frac{\pi_\theta(\vy|\vx)}{\pi_\text{ref}(\vy|\vx)} \cdot \nabla_\theta \log  \pi_\theta(\vy|\vx)  \Big],
\end{equation}
where $\hat{A}^{\pi_\text{ref}}$ is the estimated advantage value~\citep{schulman2015high} with respect to $\pi_{\bar{\theta}}$, also calculated offline. 

\paragraph{Variational Methods} provide a principled framework for approximating unknown probability distributions by leveraging optimization over a family of tractable parameterized distributions~\citep{kingma2022autoencodingvariationalbayes}. The fundamental idea of variational methods is to reformulate probabilistic inference as a functional optimization problem. More specifically, in the domain of machine learning, the goal is to find a surrogate distribution $q_\theta(\vy)$ from a parameterized family $\mathcal{Q} = \{q_\theta| \theta \in \Theta \}$, so that $q_\theta(\vy)$ can best approximates the target unknown distribution $p(\vy)$. For example, in variational inference~\citep{Blei_2017}, the unknown distribution $p(\vy)$ is approximated by minimizing the \textit{inverse} KL divergence  between $q_\theta(\vy)$ and $p(\vy)$:
\begin{equation}
\min_{q_\theta \in \gQ} \text{KL} [q_\theta(\vy) \Vert p(\vy)] = \min_{\theta \in \Theta} \mathbb{E}_{\vy \sim q_\theta(\vy)}[\log \frac{q_\theta(\vy)}{p(\vy)}].
\end{equation}
Variational inference has been widely applied in various models such as variational autoencoders (VAEs)~\citep{doersch2016tutorial}, Bayesian Neural Networks (BNNs)~\citep{graves2011practical}, and Latent Dirichlet Allocation (LDA)~\citep{blei2003latent}. Besides, the \textit{forward} KL divergence $\text{KL}[p \Vert q_\theta] = \mathbb{E}_{\vy \sim p(\vy)}[\log \frac{p(\vy)}{q_\theta(\vy)}]$ has also been recognized as an effective objective for variational methods, and has been applied in multiple machine learning scenarios such as f-divergence optimization~\citep{namkoong2016stochastic}, reinforcement learning~\citep{schulman2017proximalpolicyoptimizationalgorithms}, and Markov Chain Monte Carlo (MCMC)~\citep{salimans2015markov}.
\section{Methodology}\label{sec:method}
\subsection{Variational LLM Alignment}
We consider the RLHF task from a novel perspective of variational methods. Instead of optimizing the learning policy $\pi_\theta(\vy|\vx)$ using the objective in Equation~\ref{eq:RLHF}, we can minimize the forward $\text{KL}[\pi^*\Vert \pi_\theta]$ between the optimal policy $\pi^*(\vy|\vx)$ and the learning policy $\pi_\theta(\vy|\vx)$: 
\begin{align}
    \text{KL}[\pi^*\Vert \pi_\theta] &= \mathbb{E}_{\pi^*}\bigg[\log \frac{\pi^*(\vy|\vx)}{\pi_\theta(\vy|\vx)}\bigg] 
    = - H(\pi^*) - \mathbb{E}_{\pi^*} [\log \pi_\theta(\vy|\vx)],
\label{approximation_1}
\end{align}
where $H(\cdot)$ is the entropy of $\pi^*$ and is a constant with respect to the model parameters $\theta$. For conciseness, we adopt the expectation form of the KL divergence and let $\mathbb{E}_{\pi}$ denote $\mathbb{E}_{y\sim\pi(\vy|\vx)}$. As a result, the approximation of $\pi^*$ under minimizing KL divergence can be achieved by:
\begin{equation}
    \min_\theta\text{KL}[\pi^*\Vert \pi_\theta] = \max_\theta \mathbb{E}_{\pi^*} [\log \pi_\theta(\vy|\vx)].
    \label{KL_obj}
\end{equation}
Using importance sampling~\citep{goertzel1950quota,kahn1951estimation,kloek1978bayesian}, which effectively approximates an unknown distribution with a known one, we can rearrange Equation~\ref{KL_obj} to obtain the following objective by incorporating the closed-form solution of $\pi^*$ in Equation~\ref{eq:optimal_solution}:
\begin{equation}\label{eq:var-objective}    
    \mathbb{E}_{\pi^*} [\log \pi_\theta(\vy|\vx)]= \mathbb{E}_{\pi_\text{ref}} \bigg[\frac{\pi^*(\vy|\vx)}{\pi_\text{ref}(\vy|\vx)} \log \pi_\theta(\vy|\vx)\bigg]    
    = \bbE_{\pi_\text{ref}} \bigg[\underbrace{\frac{\exp(\frac{1}{\beta} r(\vx,\vy))}{Z(\vx)}}_\text{Reward-driven Weight} \cdot \underbrace{\log \pi_\theta(\vy|\vx)}_\text{Log-likelihood}\bigg] =:  \gL_{\text{VAR}}. \nonumber
\end{equation}
This overall objective contains two parts: a reward-driven weight and the log-likelihood of $\pi_\theta$ on the samples from the reference $\pi_\text{ref}$, which reform the loss as a re-weight SFT loss.
We call this novel alignment objective as \textbf{V}ariational \textbf{A}lignment with \textbf{R}e-weighting (VAR).
However, $Z(\vx) = \mathbb{E}_{\vy\sim\pi_{\text{ref}}(\vy|\vx)}[\text{exp}(\frac{1}{\beta}r(\vx,\vy))]$ as the partition function remains unknown. Therefore, in the following subsection, we discuss the estimation of $Z(\vx)$ within each training micro-batch.

\subsection{Batch-wise Partition Estimation}
Equation~\ref{eq:var-objective} implies that the key challenge in effectively approximating $\pi^*$ through a parameterized model $\pi_\theta$ lies in the computation of $Z(\vx)$. However, estimating $Z(\vx)$ involves summing over all possible outputs $y$ for a given $x$, which can be computationally expensive, as mentioned in previous work~\citep{rafailov2024direct}. To avoid directly computing $Z(\vx)$, some alignment methods adopt policy gradient algorithms (e.g., REINFORCE~\citep{williams1992simple} and PPO~\citep{schulman2017proximalpolicyoptimizationalgorithms}) that optimize $\pi_\theta$ without explicitly normalizing over all outputs. 
Hence,  with a batch of sample pairs $\mathcal{B} = \{\vx_i, \vy_i\}_{i=1}^B, \vy_i \sim \pi_\text{ref}(\cdot|\vx_i)$, we approximate each $Z(\vx_i)$ using Monte Carlo estimation~\citep{salimans2015markov}: 
\begin{align}
    Z(\vx_i) &= \bbE_{\vy\sim\pi_\text{ref}(\vy|\vx_i)}[\text{exp}(\frac{1}{\beta}r(\vx_i,\vy))] 
    =  \sum_{\vy\in\gY} \text{exp}(\frac{1}{\beta}r(\vx_i,\vy)) \cdot \pi_\text{ref}(\vy|\vx_i) \nonumber \\
    &\approx \frac{1}{B} \sum_{j=1}^B \text{exp}(\frac{1}{\beta}r(\vx_i,\vy_j)) \cdot \pi_\text{ref}(\vy_j|\vx_i) = :\hat{Z}(\vx_i). \label{eq:batch-est-z}
\end{align}

When the batch size $B$ is large enough, we can collect diverse $\vy_j \sim \pi_\text{ref}(\cdot|\vx_i)$ and assume  that $\{\vy_j\}_{j=1}^B$ are uniformly distributed in sample space $\gY$. 
With the above estimation, we can calculate the objective in \eqref{eq:var-objective} with a batch $\hat{\gL}_\text{VAR}(\gB) =$

\begin{equation}\label{eq:reward-driven-loss}
     \frac{1}{B} \sum_{i=1}^B \bigg[\frac{\exp(\frac{1}{\beta} r(\vx_i,\vy_i))}{\hat{Z}(\vx_i)} \cdot \log \pi_\theta(\vy_i|\vx_i) \bigg].
\end{equation}
In conclusion, we summarize the algorithm process of our in-batch estimation and loss computation in Algorithm~\ref{alg:main_estimation}.

\begin{algorithm}[t]
\small
\SetKwInOut{Input}{Input}
\SetKwInOut{Output}{Output}
\caption{VAR training within a batch}
\label{alg:main_estimation}
\Input{Batch $\mathcal{B}=\{(\vx_i,\vy_i)\}_{i=1}^{B}$, policy $\pi_{\theta}(\vy|\vx)$, reference $\pi_{\text{ref}}(\vy|\vx)$, reward model $r(\vx,\vy)$.}
\For{$i \leftarrow 1$ \KwTo $B$}{
    {Compute policy logit $\log \pi_{\theta}(\vy_i|\vx_i)$ for the given input–output pair.}\\
    {Evaluate reference logits $\{\pi_\text{ref}(\vy_j|\vx_i)\}_{j=1}^B$ over all candidates in the batch.}\\
    {Obtain rewards $\{r(\vx_i, \vy_j)\}_{j=1}^B$ for each candidate given $\vx_i$.}\\
    {Calculate $\hat{Z}(\vx_i) = \frac{1}{B} \sum_{j=1}^B \exp(\frac{1}{\beta}r(\vx_i,\vy_j)) \cdot \pi_\text{ref}(\vy_j|\vx_i)$ as the reward-weighted normalization term.}\\
    {Compute $\hat{\gL}_i = \frac{1}{\hat{Z}(\vx_i)}\exp(\frac{1}{\beta} r(\vx_i,\vy_i))\cdot \log \pi_\theta(\vy_i|\vx_i)$ as the per-sample loss term for the $i$-th example.}
}
{Update $\pi_\theta$ with $\hat{L}_\text{VAR}(\gB) = \frac{1}{B}\sum_{i=1}^B \hat{\gL}_i$ via gradient descent.}
\end{algorithm}

\subsection{Comparison with Previous Methods}
Existing preference alignment methods exhibit two fundamental limitations. First, clipping-based approaches like PPO~\citep{schulman2017proximalpolicyoptimizationalgorithms}:
\begin{equation}
    \mathbb{E}_{\vx\sim\mathcal{D}, \vo\sim\pi_{\text{old}}(\vo|\vx)} \frac{1}{|\vo|} \sum_{t=1}^{|\vo|} \min \left[ \frac{\pi_{\theta}(\vo_t|\vx, o_{<t})}{\pi_{\text{old}}(\vo_t|\vx, o_{<t})} A_t, \operatorname{clip}\left( \frac{\pi_{\theta}(\vo_t|\vx, o_{<t})}{\pi_{\text{old}}(\vo_t|\vx, o_{<t})}, 1-\epsilon, 1+\epsilon \right) A_t \right],
\end{equation}
and A-LoL~\citep{bahetileftover} (ref to \eqref{eq:a-lol}) bound the importance ratio $\pi_\theta(\vy|\vx)/\pi_\text{ref}(\vy|\vx)$ within the interval $[1-\epsilon, 1+\epsilon]$. This flattens the reward distinctions between responses with similar values. 
For instance, when two responses have rewards $R_1 = 100$ and $R_2 = 99$, clipped methods assign nearly identical probabilities ($\sim 1/2$), failing to resolve fine-grained preferences (detailed analysis in Appendix~\ref{app:clip}).

Second, existing methods such as DPO and A-LoL may encounter optimization instabilities caused by imbalanced or incorrect gradient dynamics. In particular, the gradient descent on dis-preferred responses can become pathologically strong, leading to an unstable optimization landscape.
Consider the general weighted SFT objective:
\begin{equation}
\mathcal{L}_\text{W-SFT} = -\mathbb{E}\bigg[w(\vx,\vy)\log\pi_\theta(\vy|\vx)\bigg].
\end{equation}

When $w(\vx,\vy)$ takes negative values for dis-preferred responses, the loss becomes unbounded below. Minimizing the loss corresponds to maximizing $w(\vx,\vy)\log\pi_\theta(\vy|\vx)$. For negative weights ($w < 0$), this reduces to minimizing $\log\pi_\theta(\vy|\vx)$, creating a non-compact optimization landscape. While perfect performance ($\log\pi_\theta(\vy|\vx) \rightarrow 0$) is theoretically achievable, it is practically unreachable~\citep{gao2023scaling} (detailed analysis in Appendix~\ref{app:bound}).

Our key insight is that \textit{reward-driven alignment should operate in the space of positive measures}. We therefore propose a variational method that naturally induces non-negative weights through exponential reward transformation:
\begin{equation}
w(\vx,\vy) \propto \pi_\text{ref}(\vy|\vx)\exp(r(\vx,\vy)/\beta) > 0.
\end{equation}
This construction guarantees that the loss landscape has well-defined minima bounded by the reference policy's support. By reformulating RLHF as variational inference over positive measures, we achieve stable optimization without artificial clipping or negative weighting.
\section{Experiments}\label{sec:experiment}
To evaluate the effectiveness of our proposed approach, we conducted experiments under two primary settings: (1) the Helpful and Harmless Assistant Task (HHA)~\citep{bai2022training,ganguli2022red}; and (2) generative benchmarks, including MMLU~\citep{hendrycksmeasuring}, HumanEval~\citep{chen2021evaluating}, BigBench-Hard~\citep{srivastava2023beyond}, and GSM8k~\citep{cobbe2021training}.

\subsection{HHA Settings}
\paragraph{Dataset}
Our primary experiment utilizes the HHA dataset, which consists of user-assistant conversations paired with model-generated responses labeled as ``chosen'' or ``rejected'' based on human preferences. This dataset is divided into four subsets: (1) Harmless-base, containing red-teaming conversations designed to elicit harmful responses; (2) Helpful-base, (3) Helpful-online, and (4) Helpful-rejection, which focus on advice- and assistance-seeking interactions. We evaluate our method using the test sets of these subsets, comprising a total of 8.2K test conversations annotated with human preference labels.

For model training, we employ the OffsetBias~\citep{park2024offsetbias} dataset, a preference dataset similar to HHA. We utilize OffsetBias because our method explicitly relies on reward scores during training. We observed that directly applying the HHA training set and its corresponding reward models often results in inappropriate reward scores, such as instances where the chosen response receives a lower reward score than the rejected response. This issue significantly compromises the effectiveness of our method. In contrast, OffsetBias addresses six common types of biases, including length bias—where models tend to assign higher scores to longer sequences—that can mislead reward models into assigning inaccurate reward scores. By mitigating these biases, OffsetBias provides more robust and reliable reward scores, making it better suited for training our model effectively. For all HHA experiments, we use the full training set of OffsetBias, which consists of 8.5K samples.
\paragraph{Models}
To evaluate the scalability of our method, we conducted experiments on two model collections: Llama-\{1B, 3B, 8B, 13B\}\footnote{To ensure the use of the most updated models, we selected Llama3.2-\{1B, 3B\}, Llama3.1-8B, and Llama2-13B.}~\citep{dubey2024llama} and Qwen2.5-\{0.5B, 1.5B, 3B, 7B, 14B, 32B\}~\citep{yang2024qwen2}. Specifically, we benchmark our method against DPO across all models and consider two RL training settings: 1) starting from the pre-trained (base) model; (2) starting from the SFT model.
For the reward model, we employ a popular OffsetBiasRM~\citep{park2024offsetbias} that is trained on the OffsetBias preference dataset. OffsetBiasRM is designed to provide more accurate reward scores by addressing common biases, making it more suitable for our experiments.

\begin{table*}[t]\vspace{-1em}
\scriptsize
\centering
\setlength{\tabcolsep}{8.8pt} 
\caption{Reward scores obtained by aligning Llama model series using the OffsetBias training set and evaluating on the four subsets of the HHA benchmark. Additionally, we report the test reward on the split test set of OffsetBias. ``Avg. Helpful'' denotes the average reward across Helpful-base, Helpful-online, and Helpful-rejection, while ``Avg. All'' represents the average reward across all four subsets of HHA.}
\begin{tabular}{@{}ll|cccc |ccc@{}}
\toprule[1.5pt]
\multicolumn{2}{c|}{\multirow{2}{*}{Method}} & Harmless $\uparrow$ & \multicolumn{3}{c|}{Helpful $\uparrow$} & \multirow{2}{*}{Avg. Helpful $\uparrow$} & \multirow{2}{*}{Avg. All \textcolor{black}{$\uparrow$}} & \multirow{2}{*}{OffsetBias \textcolor{black}{$\uparrow$}} \\ 
\specialrule{0.0em}{0.0ex}{-0.1ex} 
\cmidrule(lr){4-6}
\specialrule{0.0em}{-0.8ex}{0.0ex} 
&& base & base & online & rejection & \\
\midrule
\multirow{5}{*}{Llama3.2-1B}
&Base   & 37.03 & 20.51 & 24.04 & 21.93 & 22.16$_{\pm 1.34}$ & 25.88$_{\pm 1.59}$ & 21.00 \\
&DPO    & 45.50 & 44.45 & 47.07 & 45.61 & 45.71$_{\pm 0.16}$ & 45.66$_{\pm 0.08}$ & 37.31 \\
&VAR   & 52.48 & 57.35 & 60.58 & 59.38 & 59.10$_{\pm 0.24}$ & 57.44$_{\pm 0.14}$ & 56.81 \\ 
&SFT+DPO   & 56.43 & 64.65 & 64.95 & 65.90 & 65.16$_{\pm 0.43}$ & 62.98$_{\pm 0.30}$ & 59.09 \\
&SFT+VAR & \textbf{60.19} & \textbf{65.96} & \textbf{68.94} & \textbf{68.27} & \textbf{67.72}$_{\pm 0.11}$ & \textbf{65.84}$_{\pm 0.07}$ & \textbf{61.97} \\ \midrule
\multirow{5}{*}{Llama3.2-3B}
&Base   & 35.05 & 26.50 & 31.15 & 28.60 & 28.75$_{\pm 0.38}$ & 30.33$_{\pm 0.14}$ & 26.61 \\ 
&DPO    & 53.71 & 59.38 & 60.04 & 60.55 & 59.99$_{\pm 0.05}$ & 58.42$_{\pm 0.09}$ & 53.94 \\
&VAR   & 57.97 & 60.23 & 64.92 & 62.92 & 62.69$_{\pm 0.04}$ & 61.51$_{\pm 0.08}$ & 60.88 \\ 
&SFT+DPO   & \textbf{64.00} & \textbf{69.44} & 71.01 & \textbf{71.56} & \textbf{70.67}$_{\pm 0.08}$ & \textbf{69.00}$_{\pm 0.05}$ & \textbf{63.81} \\
&SFT+VAR & \textbf{64.00} & 67.93 & \textbf{71.32} & 70.83 & 70.02$_{\pm 0.22}$ & 68.52$_{\pm 0.12}$ & 63.72 \\ \midrule
\multirow{5}{*}{Llama3.1-8B}
&Base   & 38.73 & 34.74 & 39.96 & 37.30 & 37.33$_{\pm 0.65}$ & 37.68$_{\pm 0.46}$ & 30.42 \\ 
&DPO    & 56.17 & 59.89 & 61.02 & 60.86 & 60.59$_{\pm 0.02}$ & 59.48$_{\pm 0.03}$ & 51.38 \\
&VAR   & 57.18 & 61.13 & 65.57 & 64.33 & 63.68$_{\pm 0.20}$ & 62.06$_{\pm 0.11}$ & 63.91 \\ 
&SFT+DPO   & 62.38 & \textbf{69.16} & 70.00 & 70.61 & 69.93$_{\pm 0.07}$ & 68.04$_{\pm 0.11}$ & 60.88 \\
&SFT+VAR & \textbf{63.24} & 68.13 & \textbf{71.13} & \textbf{70.78} & \textbf{70.01}$_{\pm 0.47}$ & \textbf{68.32}$_{\pm 0.51}$ & \textbf{65.75} \\ \midrule
\multirow{5}{*}{Llama2-13B}
&Base   & 33.06 & 27.39 & 29.53 & 28.36 & 28.43$_{\pm 0.04}$ & 29.59$_{\pm 0.13}$ & 27.05 \\ 
&DPO    & 50.52 & 50.01 & 53.68 & 52.24 & 51.98$_{\pm 0.15}$ & 51.61$_{\pm 0.14}$ & 51.75 \\
&VAR   & 58.45 & 58.94 & 62.94 & 61.89 & 61.26$_{\pm 0.24}$ & 60.56$_{\pm 0.25}$ & 61.44 \\ 
&SFT+DPO   & 55.19 & 59.90 & 60.61 & 61.26 & 60.59$_{\pm 0.25}$ & 59.24$_{\pm 0.19}$ & 59.09 \\
&SFT+VAR & \textbf{61.29} & \textbf{63.27} & \textbf{66.07} & \textbf{65.75} & \textbf{65.03}$_{\pm 0.15}$ & \textbf{64.09}$_{\pm 0.09}$ & \textbf{62.59} \\
\bottomrule[1.5pt]
\end{tabular}
\label{table:llama3-hh-reward}
\end{table*}

\paragraph{Evaluation}
By following previous work~\citep{rafailov2024direct, bahetileftover}, we adopt two popular evaluation strategies: 1) \textbf{Reward Score}: A higher reward score usually indicates more useful and helpful response with respective to the input. Specifically, we use the OffsetBiasRM reward model~\citep{park2024offsetbias} to calculate reward scores for sequences generated by the aligned models on the HHA test set. Additionally, we also evaluate reward scores on the split OffsetBias evaluation set to assess the in-distribution ability of the models. 
2) \textbf{Pairwise Winrate Score}: Following the common practice of LLM-as-a-judge, we evaluate model outputs from the HHA test set using the GPT-4 judge with its standardized prompt template from MT-Bench~\citep{zheng2023judgingllmasajudgemtbenchchatbot}\footnote{\textcolor{black}{Note that we only adopt the MT-Bench judging protocol but do not use the MT-Bench benchmark itself.}}.
To alleviate potential positional bias, we present the responses of two models to the judge in two different orders and compare their scores. A model is considered to win only if it does not lose in both orderings. Specifically, we define: \textit{Wins}: Outperforms in both orderings or wins in one and ties in the other. \textit{Tie}: Ties in both orderings or wins in one and loses in the other. \textit{Loses}: Lags in both orderings or ties in one and loses in the other.

\paragraph{Implementation Details}
For all comparisons, we ensure consistent settings between our method and DPO. We select learning rates of 1e-5 and 5e-6, employing the AdamW optimizer with a cosine learning rate scheduler. For the Winrate evaluation, we randomly sample 99 instances from each of the four subsets of the HHA test set and conduct experiments using three different random seeds to compute the mean and standard error. To mitigate potential positional bias in GPT-4's preferences, we randomly shuffle the positions of model-generated sequences and the SFT target during evaluation using \texttt{gpt-4o-2024-11-20}. For the Reward evaluation, we use the entire HHA test set, evaluating across three different random seeds to compute the mean and standard error. We employ generation parameters $\tau = 0.8$, $\text{top\_p} = 0.9$ and $\text{top\_k} = 50$ for all the generations. All models are trained on 4$\times$A100-80GB GPUs, with Llama3-8B and Qwen2.5-14B utilizing 8$\times$A100-80GB. Due to resource constraints, Qwen2.5-32B is trained using 4-bit quantization (bnb-4bit)\footnote{\url{https://huggingface.co/docs/bitsandbytes}} across 4 nodes, each with 8$\times$A100-80GB GPUs.

\begin{table*}[t]\vspace{-1em}
\scriptsize
\centering
\setlength{\tabcolsep}{7pt} 
\caption{Reward scores obtained by aligning Qwen2.5 model series using the OffsetBias training set and evaluating on the four subsets of the HHA benchmark. Additionally, we report the test reward on the split test set of OffsetBias. ``Avg. Helpful'' denotes the average reward across Helpful-base, Helpful-online, and Helpful-rejection, while ``Avg. All'' represents the average reward across all four subsets of HHA.}
\begin{tabular}{@{}ll|cccc |ccc@{}}
\toprule[1.5pt]
\multicolumn{2}{c|}{\multirow{2}{*}{Method}} & Harmless \textcolor{black}{$\uparrow$} & \multicolumn{3}{c|}{Helpful \textcolor{black}{$\uparrow$}} & \multirow{2}{*}{Avg. Helpful \textcolor{black}{$\uparrow$}} & \multirow{2}{*}{Avg. All \textcolor{black}{$\uparrow$}} & \multirow{2}{*}{OffsetBias \textcolor{black}{$\uparrow$}} \\ 
\specialrule{0.0em}{0.0ex}{-0.1ex} 
\cmidrule(lr){4-6}
\specialrule{0.0em}{-0.8ex}{0.0ex} 
&& base & base & online & rejection & \\
\midrule
\multirow{5}{*}{Qwen2.5-0.5B}
&Base   & 33.03 & 25.44 & 30.86 & 26.94 & 27.75$_{\pm 0.30}$ & 29.06$_{\pm 0.14}$ & 40.38 \\
&DPO    & 55.21 & 55.50 & 56.24 & 56.75 & 56.17$_{\pm 0.53}$ & 55.93$_{\pm 0.45}$ & 53.44 \\
&VAR   & 55.22 & 58.09 & 62.32 & 60.38 & 60.26$_{\pm 0.21}$ & 59.00$_{\pm 0.07}$ & 59.50 \\ 
&SFT+DPO   & 56.42 & 58.02 & 60.38 & 59.91 & 59.44$_{\pm 0.03}$ & 58.68$_{\pm 0.02}$ & 55.88 \\
&SFT+VAR & \textbf{58.22} & \textbf{61.72} & \textbf{63.58} & \textbf{63.56} & \textbf{62.95}$_{\pm 0.14}$ & \textbf{61.77}$_{\pm 0.12}$ & \textbf{60.63} \\ \midrule
\multirow{5}{*}{Qwen2.5-1.5B}
&Base   & 35.01 & 26.18 & 32.11 & 28.13 & 28.81$_{\pm 0.12}$ & 30.36$_{\pm 0.32}$ & 26.52 \\ 
&DPO    & 53.40 & 57.00 & 57.38 & 57.85 & 57.41$_{\pm 0.17}$ & 56.41$_{\pm 0.13}$ & 56.63 \\
&VAR   & 61.33 & 64.72 & 68.87 & 68.01 & 67.20$_{\pm 0.09}$ & 65.73$_{\pm 0.05}$ & 64.75 \\ 
&SFT+DPO   & 54.51 & 61.13 & 61.54 & 62.26 & 61.64$_{\pm 0.32}$ & 59.86$_{\pm 0.28}$ & 57.63 \\
&SFT+VAR & \textbf{62.76} & \textbf{66.07} & \textbf{69.13} & \textbf{68.78} & \textbf{67.99}$_{\pm 0.13}$ & \textbf{66.69}$_{\pm 0.03}$ & \textbf{65.88} \\ \midrule
\multirow{5}{*}{Qwen2.5-3B}
&Base   & 47.07 & 34.27 & 41.86 & 36.61 & 37.58$_{\pm 0.14}$ & 39.96$_{\pm 0.18}$ & 45.06 \\ 
&DPO    & 60.58 & 63.37 & 63.84 & 64.90 & 64.03$_{\pm 0.53}$ & 63.17$_{\pm 0.51}$ & 58.97 \\
&VAR   & 63.30 & 66.29 & 70.07 & 69.32 & 68.56$_{\pm 0.08}$ & 67.24$_{\pm 0.05}$ & \textbf{66.06} \\ 
&SFT+DPO   & 54.61 & 58.45 & 58.69 & 60.19 & 59.11$_{\pm 0.17}$ & 57.98$_{\pm 0.17}$ & 52.63 \\
&SFT+VAR & \textbf{65.15} & \textbf{67.86} & \textbf{71.24} & \textbf{70.78} & \textbf{69.96}$_{\pm 0.06}$ & \textbf{68.75}$_{\pm 0.05}$ & 65.63 \\ \midrule
\multirow{5}{*}{Qwen2.5-7B}
&Base   & 42.49 & 42.02 & 47.96 & 44.21 & 44.73$_{\pm 0.53}$ & 44.17$_{\pm 0.28}$ & 54.09 \\ 
&DPO    & 61.51 & 66.09 & 66.50 & 67.28 & 66.62$_{\pm 0.18}$ & 65.35$_{\pm 0.10}$ & \textbf{65.81} \\
&VAR   & 64.86 & 66.41 & \textbf{70.87} & \textbf{69.78} & \textbf{69.02}$_{\pm 0.09}$ & \textbf{67.98}$_{\pm 0.03}$ & 65.38 \\ 
&SFT+DPO   & 60.78 & 62.02 & 61.62 & 62.67 & 62.10$_{\pm 0.25}$ & 61.77$_{\pm 0.30}$ & 56.91 \\
&SFT+VAR & \textbf{64.96} & \textbf{66.46} & 69.81 & 69.39 & 68.55$_{\pm 0.11}$ & 67.65$_{\pm 0.05}$ & 65.38 \\ \midrule
\multirow{5}{*}{Qwen2.5-14B}
&Base   & 39.29 & 32.73 & 39.03 & 35.16 & 35.64$_{\pm 0.08}$ & 36.55$_{\pm 0.27}$ & 45.16 \\ 
&DPO    & 55.83 & 55.63 & 59.04 & 57.61 & 57.43$_{\pm 0.32}$ & 57.03$_{\pm 0.26}$ & 62.94 \\
&VAR   & 64.24 & 65.23 & 69.79 & 68.50 & 67.84$_{\pm 0.14}$ & 66.94$_{\pm 0.03}$ & 65.63 \\ 
&SFT+DPO   & \textbf{66.97} & \textbf{67.83} & 68.16 & 68.94 & 68.31$_{\pm 0.42}$ & 67.97$_{\pm 0.38}$ & \textbf{67.94} \\
&SFT+VAR & 66.37 & 67.74 & \textbf{71.70} & \textbf{71.08} & \textbf{70.17}$_{\pm 0.21}$ & \textbf{69.22}$_{\pm 0.10}$ & 66.44 \\ \midrule
\multirow{5}{*}{Qwen2.5-32B-Int4}
&Base   & 38.80 & 34.36 & 39.13 & 36.78  & 36.77 & 37.27 & 38.97 \\ 
&DPO    & 37.09 & 31.38 & 34.58 & 33.04  & 33.00 & 34.02 & 32.38 \\
&VAR   & 50.03 & 45.36 & 51.77 & 47.77  & 48.30 & 48.73 & 57.69 \\ 
&SFT+DPO   & 37.95 & 27.70 & 28.63 & 28.16  & 28.16 & 30.61 & 30.23 \\
&SFT+VAR & \textbf{53.18} & \textbf{49.07} & \textbf{55.66} & \textbf{51.90} &  \textbf{52.21} & \textbf{52.45} &  \textbf{59.09} \\
\bottomrule[1.5pt]
\end{tabular}
\label{table:qwen2.5-hh-reward}
\end{table*}

\subsection{HHA Results}

\paragraph{Reward Evaluation}
Table~\ref{table:llama3-hh-reward} present the reward scores on the HHA test sets. \textit{DPO} and \textit{VAR} denote models trained directly from the \textit{Base} model (pre-trained only), while \textit{SFT+} refers to models first fine-tuned via SFT and then further fine-tuned on the SFT model. From Table~\ref{table:llama3-hh-reward}, we observe that our method outperforms DPO in both \textit{Avg. Helpful} and \textit{Avg. All} across all Llama models for the \textit{base} version, as well as for the \textit{SFT+} version, except for Llama3.2-3B, where it shows a marginal decrease of around 0.5\% compared to DPO. Additionally, our method achieves comparable results whether trained directly from the base model or the SFT model, particularly for larger LLMs such as Llama3.1-8B and Llama2-13B, whereas DPO struggles to achieve strong results when starting from the base model. Moreover, our method achieves performance comparable to the RLHF objective in a single training step, resembling the simplicity and efficiency of SFT. Table~\ref{table:qwen2.5-hh-reward} further demonstrates the scalability of our method across different model sizes. Our approach consistently outperforms DPO across all average reward scores on both HHA and OffsetBias, even when starting from the base model. For Qwen2.5-32B, due to limited resources, we employ 4-bit quantization for training. Nevertheless, our method maintains its advantage over DPO.

\begin{figure*}[!t]
    \centering
    \includegraphics[width=1\linewidth]{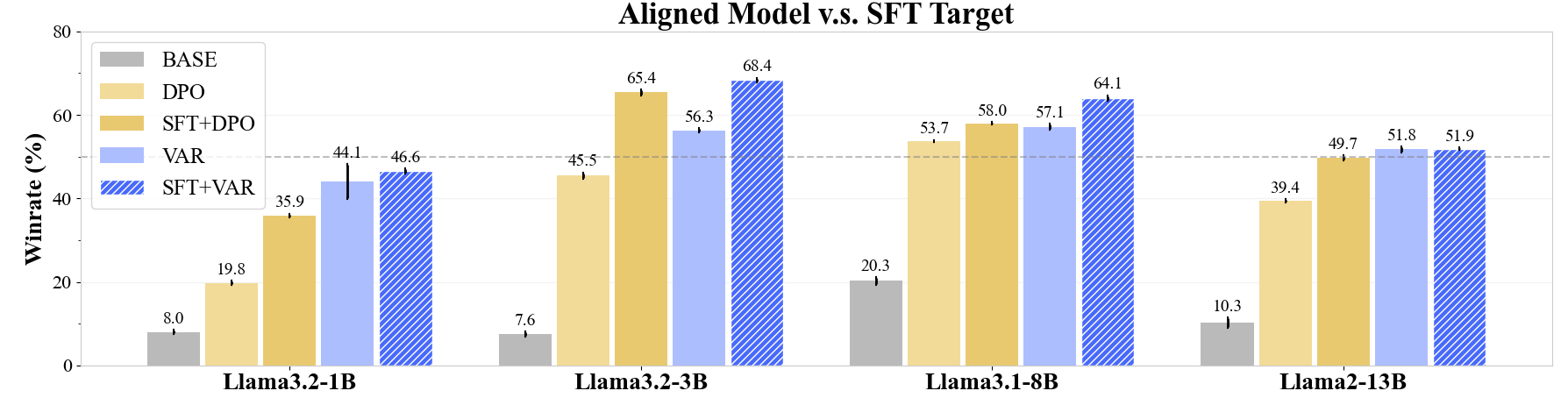}
    \includegraphics[width=1\linewidth]{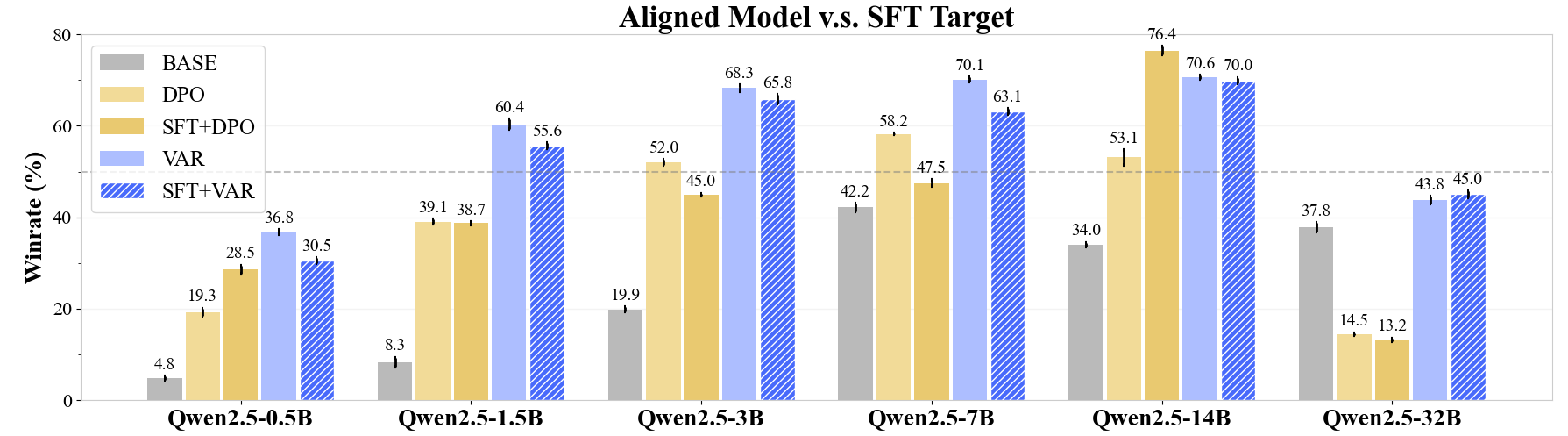}
    \caption{GPT-4 evaluation results on the HHA test set for the Llama (top) and Qwen (bottom) series, reporting average win rates, where error bars are calculated across three different random seeds.}
    \label{fig:llama-win-rate}
\end{figure*}
\begin{table*}[t]
\scriptsize
\centering
\caption{Winrate results for the Llama3 series instruct versions (1B, 3B, and 8B) on the HHA benchmark.}
\resizebox{0.99\textwidth}{!}{
\begin{tabular}{@{}ll|ccc|ccc|ccc|ccc|cc@{}}
\toprule[1.5pt]
\multicolumn{2}{c|}{\multirow{2}{*}{Method}} & \multicolumn{3}{c|}{Harmless-base \textcolor{black}{$\uparrow$}} & \multicolumn{3}{c|}{Helpful-base \textcolor{black}{$\uparrow$}} & \multicolumn{3}{c|}{Helpful-online \textcolor{black}{$\uparrow$}} & \multicolumn{3}{c|}{Helpful-rejection \textcolor{black}{$\uparrow$}} & \multicolumn{2}{c}{Avg. Winrate \textcolor{black}{$\uparrow$}} \\
&& A win & B win & Tie & A win & B win & Tie & A win & B win & Tie & A win & B win & Tie & All & Helpful \\ \midrule
\multirow{2}{*}{Llama3.2-1B} 
& DPO & 20.2 & 59.6 & 20.2 & 49.49 & 32.32 & 18.18 & 12.46 & 71.38 & 16.16 & 34.68 & 47.47 & 17.84 & 29.21$_{\pm 0.17}$ & 32.21$_{\pm 0.41}$ \\
& VAR & \textbf{55.56} & 34.34 & 10.10 & \textbf{72.73} & 16.50 & 10.77 & \textbf{48.48} & 42.42 & 9.09 & \textbf{69.36} & 19.86 & 10.77 & \textbf{61.53}$_{\pm 0.22}$ & \textbf{63.53}$_{\pm 0.30}$ \\ \midrule
\multirow{2}{*}{Llama3.2-3B} 
& DPO & \textbf{76.10} & 20.54 & 3.37 & 76.77 & 14.81 & 8.42 & 49.83 & 44.10 & 6.06 & 62.97 & 25.59 & 11.45 & 66.42$_{\pm 0.58}$ & 63.19$_{\pm 0.74}$ \\
& VAR & 63.30 & 26.26 & 10.44 & \textbf{89.90} & 3.70 & 6.40 & \textbf{61.96} & 32.66 & 5.39 & \textbf{79.13} & 12.12 & 8.75 & \textbf{73.57}$_{\pm 0.95}$ & \textbf{76.99}$_{\pm 0.98}$ \\ \midrule
\multirow{2}{*}{Llama3.1-8B} 
& DPO & 42.42 & 42.42 & 15.15 & 68.69 & 19.19 & 12.12 & 33.33 & 54.21 & 12.46 & 53.20 & 31.65 & 15.15 & 49.41$_{\pm 0.17}$ & 51.74$_{\pm 0.11}$ \\
& VAR & \textbf{51.52} & 33.67 & 14.81 & \textbf{88.55} & 8.08 & 3.37 & \textbf{58.59} & 34.34 & 7.07 & \textbf{85.86} & 10.44 & 3.70 & \textbf{71.13}$_{\pm 0.34}$ & \textbf{77.67}$_{\pm 0.30}$ \\
\bottomrule[1.5pt]
\end{tabular}
}%
\label{table:llama3-instruct-hh}\vspace{-1em}
\end{table*}

\paragraph{Winrate Evaluation}
Figure~\ref{fig:llama-win-rate} presents the win rates evaluated by GPT-4o for answers generated by aligned models compared to the SFT targets (chosen answers in the test set) for the Llama and Qwen series. The Llama series show results consistent with the reward scores, where our method outperforms DPO across all models except Llama2-13B, which achieves comparable results. 
For the Qwen series, our method outperforms DPO across all Qwen models except Qwen2.5-14B, where it shows a slight decrease. The Qwen series exhibit slightly different trends compared to the reward scores, with our method starting from the base version outperforming the SFT+ version across scales from 0.5B to 14B and achieving comparable results at 32B.
These findings further demonstrate that our method can achieve the RLHF objective in a single SFT-like step without the need for resource-intensive reinforcement learning.

Table~\ref{table:llama3-instruct-hh} shows results on the Llama series instruct versions (i.e., models after RLHF). Our method outperforms DPO by a large margin across three scales (1B, 3B, and 8B) and consistently achieves higher win rates across all subsets of the HHA benchmark. This demonstrates the robustness of our method across models at different training stages, including those that have already undergone RLHF alignment. The results highlight the effectiveness of our approach in further refining and aligning models with human preferences, even when starting from pre-aligned instruct versions.

\begin{figure*}[!t]
    \centering
    \subfigure[Llama3.2-1B]{\includegraphics[width=0.3\textwidth]{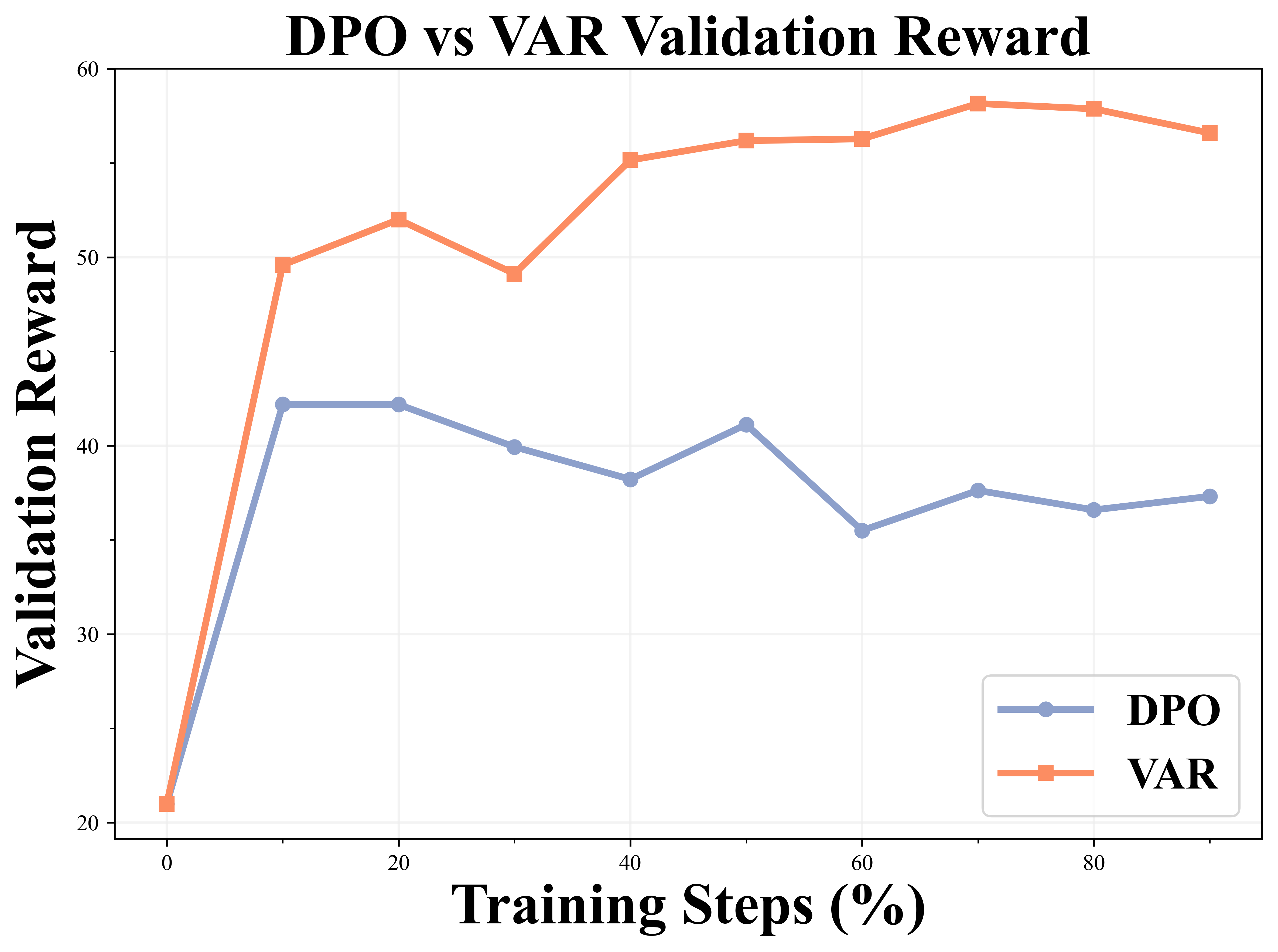}}
    \subfigure[Llama3.2-3B]{\includegraphics[width=0.3\textwidth]{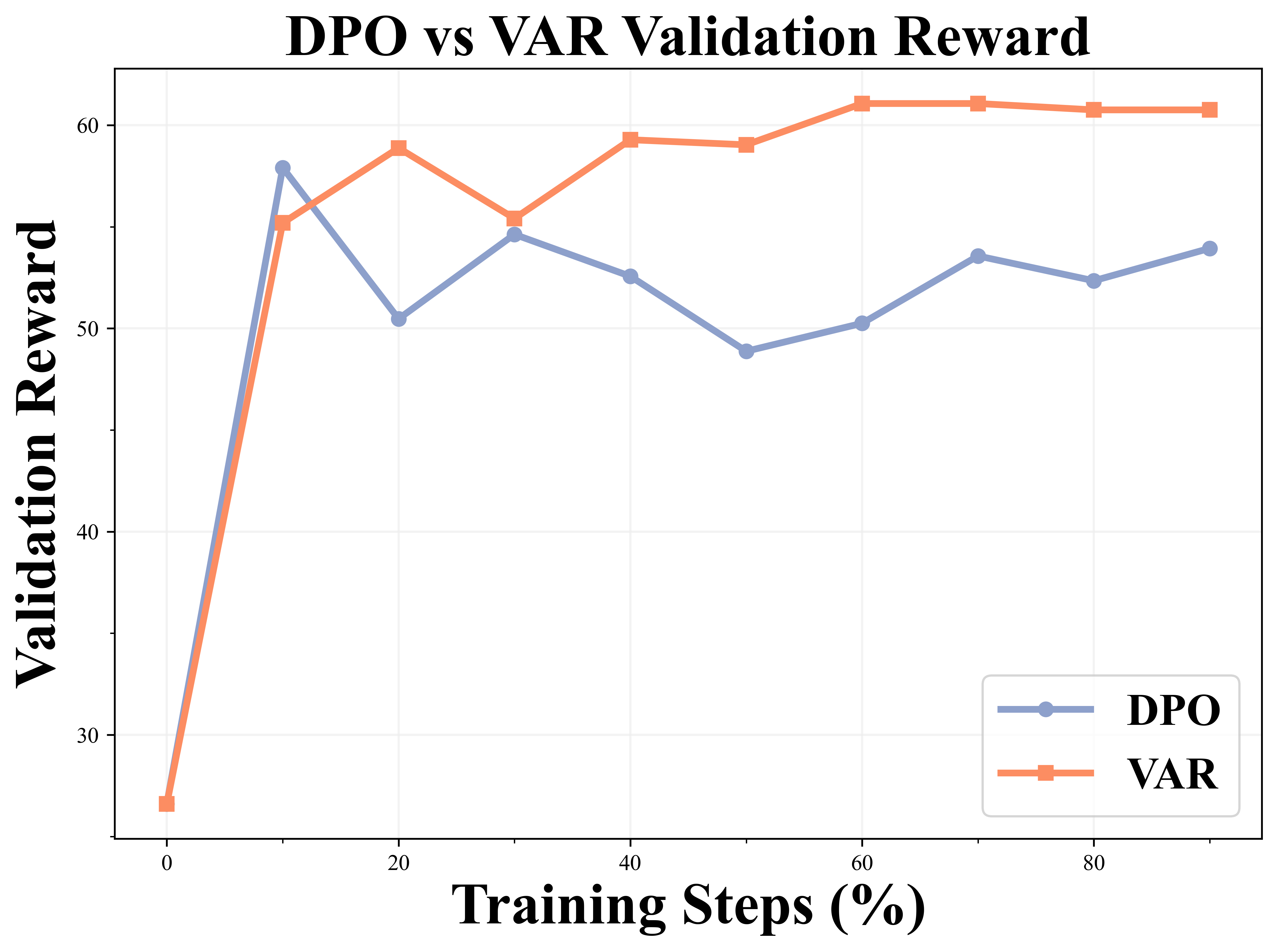}}
    \subfigure[Llama3.1-8B]{\includegraphics[width=0.3\textwidth]{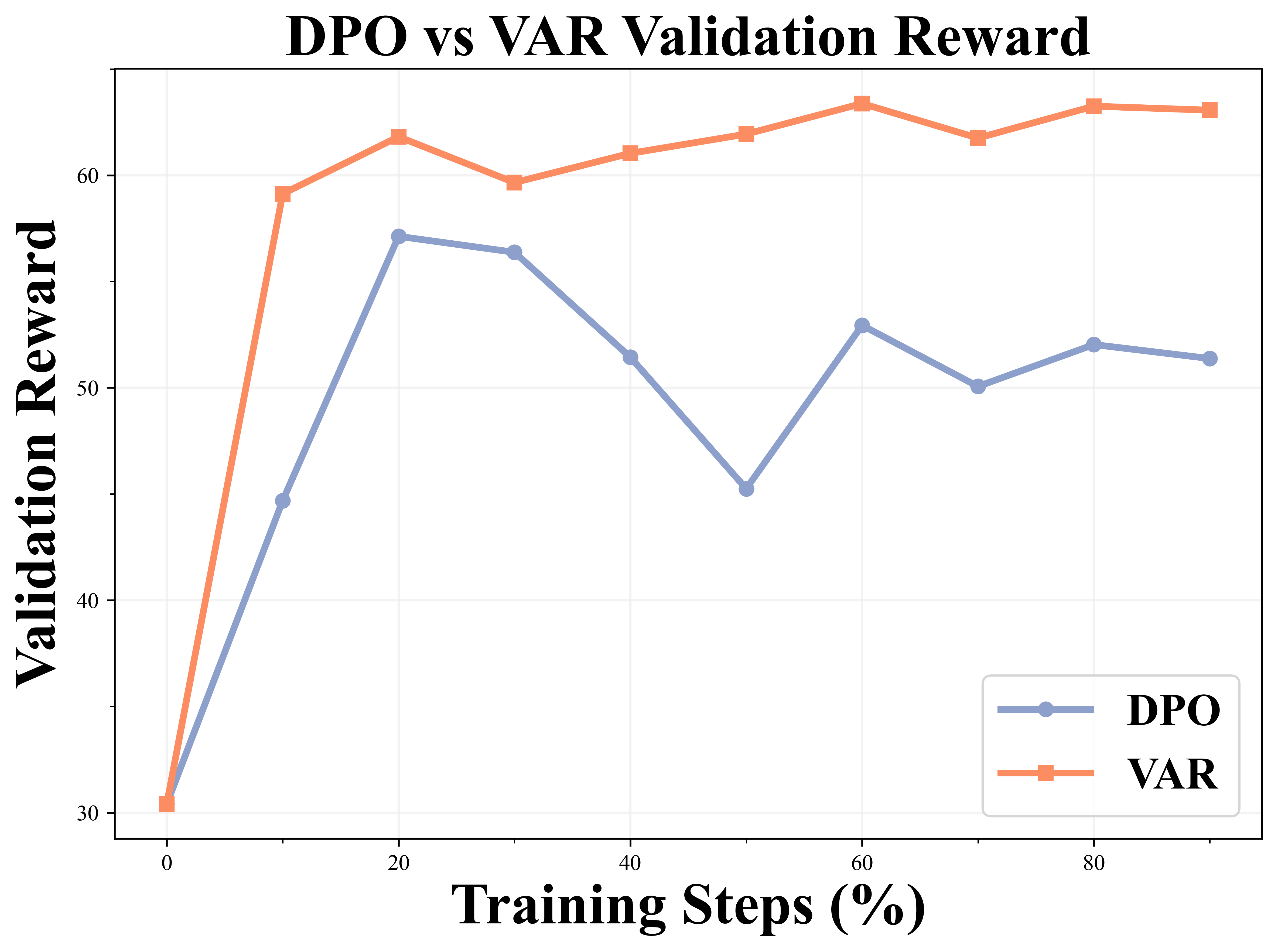}}
    \caption{Average validation reward during the training process for (a) Llama3.2-1B, (b) Llama3.2-3B, and (c) Llama3.1-8B on the OffsetBias dataset, comparing DPO and our method.}
    \label{fig:val-reward}\vspace{-1em}
\end{figure*}

\paragraph{Training Stability}
Figure~\ref{fig:val-reward} illustrates the average validation reward during the training process for three Llama model collections. Comparing our method with DPO, we observe that DPO exhibits greater volatility and tends to decline from the early training steps. In contrast, our method demonstrates a gradual increase in validation reward, ultimately reaching a consistent level. This indicates that our approach is more robust over longer training steps and maintains a more stable training process compared to DPO.

\paragraph{Comparison with Online Methods}
\begin{wraptable}{r}{0.7\textwidth}\vspace{-2.5em}
\scriptsize
\centering
\setlength{\tabcolsep}{2pt}
\vspace{1em}
\caption{Comparison between offline VAR and online GRPO methods on Qwen2.5-3B and Llama3.2-3B, including training time per epoch.}
\vspace{-1em}
\begin{tabular}{@{}l|cccc|cc|c@{}}
\toprule[1.5pt]
\multirow{2}{*}{Method} & Harmless \textcolor{black}{$\uparrow$} & \multicolumn{3}{c|}{Helpful \textcolor{black}{$\uparrow$}} & \multirow{2}{*}{Avg. Helpful \textcolor{black}{$\uparrow$}} & \multirow{2}{*}{Avg. All \textcolor{black}{$\uparrow$}} & \multirow{2}{*}{Train. Time \textcolor{black}{$\downarrow$}} \\ 
\specialrule{0.0em}{0.0ex}{-0.1ex}
\cmidrule(lr){3-5}
\specialrule{0.0em}{-0.8ex}{0.0ex} 
& base & base & online & rejection & & &  \\
\midrule
Qwen2.5-3B   & 47.07 & 34.27 & 41.86 & 36.61 & 37.58 & 39.95 & -- \\
+GRPO & 61.75  & \textbf{68.78}  & \textbf{70.73}  & \textbf{71.64}  & \textbf{70.38}  & \textbf{68.23} & 2h54min \\
+VAR & \textbf{63.30} & 66.29 & 70.07 & 69.32 & 68.56  & 67.25 & \textbf{42min} \\
\midrule
Llama3.2-3B   & 33.03 & 25.44 & 30.86 & 26.94 & 27.75 & 29.06 & -- \\
+SFT+GRPO & 56.34  & 59.98  & 62.32  & 62.72  & 61.67  & 60.34 & 3h8min \\
+VAR & \textbf{57.97} & \textbf{60.23} & \textbf{64.92} & \textbf{62.92} & \textbf{62.69}& \textbf{61.51} & \textbf{37min} \\
\bottomrule[1.5pt]
\end{tabular}
\label{table:compare_grpo}
\vspace{-2em}
\end{wraptable}
We evaluate our offline VAR method against the online GRPO approach~\citep{shao2024deepseekmathpushinglimitsmathematical} using the OffsetBias training set, with results measured on the HHA test sets (Table~\ref{table:compare_grpo}). Despite being an offline method, VAR achieves comparable performance to GRPO on Qwen2.5-3B and even outperforms it on Llama3.2-3B (where GRPO requires SFT initialization for stable training). Notably, VAR trains approximately 5× faster per epoch than GRPO, demonstrating its efficiency while maintaining competitive performance with significantly fewer computational resources.

\subsection{Generative Benchmark}
\label{sec:benchmark}
\paragraph{Settings}
Following prior settings~\citep{tunstall2023zephyr,ethayarajh2024kto}, we utilize UltraFeedback~\citep{cui2023ultrafeedback} as the training dataset. UltraFeedback is a large-scale preference dataset collected from diverse sources, where multiple LLMs generate four distinct responses for each prompt. The dataset comprises 64k prompts, resulting in 256k samples. Additionally, it includes GPT-4-evaluated scores for instruction-following, truthfulness, honesty, and helpfulness. For our experiments, we sampled 10k prompts, selecting the highest average-scored samples for training SFT, OURS, and ALoL, and using the highest-worst score pairs for training DPO. For training, we utilize Llama2-7B and Qwen2.5-7B as the base models. For comparison, we benchmark our method against ALoL and DPO. As for the reward model, we employ OffsetBiasRM for both ALoL and our method, with the same setting in HHA experiments.

\paragraph{Implementation Details}
For all experiments, we use $4\times8$ A100-80GB GPUs, training with a learning rate of 5e-6 and the AdamW optimizer combined with a cosine learning rate scheduler for exactly two epochs. We evaluate the aligned models using the OpenCompass~\citep{2023opencompass} toolkit, with the following benchmarks: GSM8K (4-shot), MMLU (0-shot), HumanEval (0-shot), and BBH (3-shot chain-of-thought), following the default settings from OpenCompass. For GSM8K, MMLU, and BBH, we use exact match (EM) as the evaluation metric, while for HumanEval, we use pass@1.

\paragraph{Generative Benchmark Results}
Table~\ref{table:bench} presents the results of different methods on Llama2-7B and Qwen2.5-7B models across four benchmarks: MMLU, GSM8k, HumanEval, and BBH. For Llama2-7B, our method consistently outperforms both DPO and ALoL across most benchmarks, achieving the highest average results. On Qwen2.5-7B, our method also demonstrates strong results, achieving the best performance on multiple benchmarks while maintaining competitive results on others, highlighting the robustness and effectiveness across various settings.

\begin{figure}[t]
\centering
\begin{minipage}[t]{\textwidth}
\centering
\begin{minipage}[t]{0.5\textwidth}
\makeatletter\def\@captype{table}
\centering
\scriptsize
\setlength{\tabcolsep}{3pt}
\caption{Benchmark comparison of different alignment methods on Llama2-7B and Qwen2.5-7B models. Results are shown for MMLU, GSM8K, HumanEval, and BBH, along with the average score across all benchmarks. Best results for each model are bolded.}
\vspace{-0.6\baselineskip} 
\begin{tabular}{@{}ll|cccc|c@{}}
\toprule[1.5pt]
\multicolumn{2}{c|}{\multirow{2}{*}{Method}} & MMLU  & GSM8k  & HumanEval  & BBH  & \multirow{2}{*}{Avg. \textcolor{black}{$\uparrow$}} \\ 
&& EM \textcolor{black}{$\uparrow$} & EM \textcolor{black}{$\uparrow$} & pass@1 \textcolor{black}{$\uparrow$} & EM \textcolor{black}{$\uparrow$} \\ \midrule
\multirow{4}{*}{Llama2-7B} 
& Base  & 37.46  & 1.90  & 3.05  & 12.77  & 13.79  \\
& DPO   & 32.45  & 4.55  & 7.32  & \textbf{39.10} & 20.86  \\
& ALoL  & 35.78  & 4.09  & 12.80  & 38.16  & 22.71 \\
& VAR  & \textbf{38.57}  & \textbf{6.67}  & \textbf{14.02}  & 37.56 & \textbf{24.20}  \\ \midrule
\multirow{4}{*}{Qwen2.5-7B} 
& Base  & 67.13  & \textbf{86.13}  & 64.63  & 29.30  & 61.80  \\
& DPO   & 68.64  & 74.53  & 33.54  & 53.06 & 57.44   \\
& ALoL  & 68.62  & 64.97  & 54.88  & 61.14 & 62.40   \\
& VAR  & \textbf{69.11}  & 74.30  & \textbf{68.90}  & \textbf{61.35}  & \textbf{68.42}   \\
\bottomrule[1.5pt]
\end{tabular}
\label{table:bench}
\end{minipage}%
\hfill
\begin{minipage}[t]{0.46\textwidth}
\makeatletter\def\@captype{figure}
\centering
\vspace{0.1\baselineskip} 
\includegraphics[width=\linewidth]{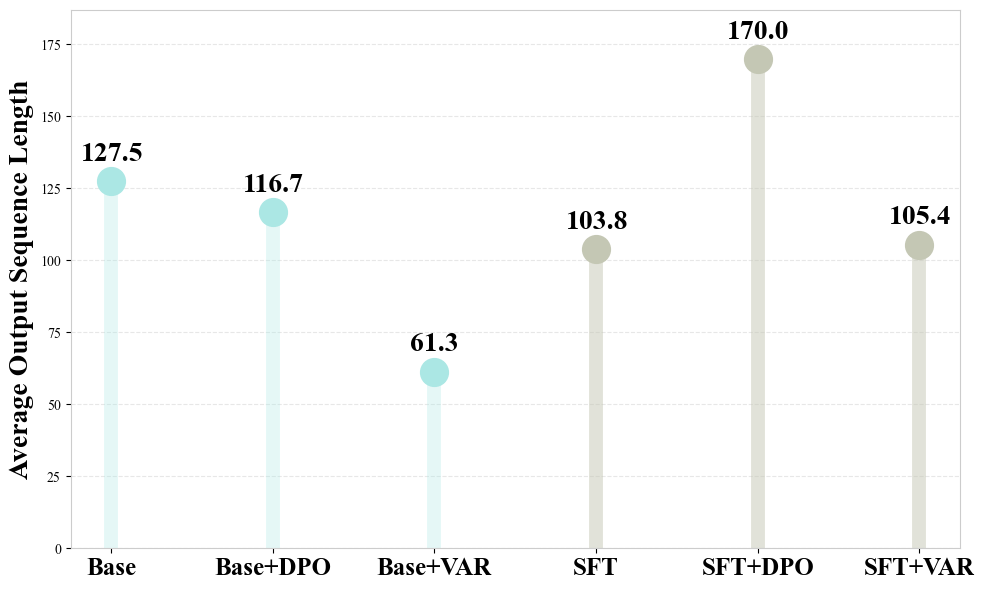}
\vspace{-\baselineskip} 
\caption{Distribution of output sequence lengths for aligned Llama3.1-8B on the HHA test set.}
\label{fig:seq-length}
\end{minipage}
\end{minipage}
\end{figure}
\vspace{1em}
\begin{wraptable}{r}{0.4\textwidth}
{\color{black}
\vspace{-2em}
\scriptsize
\centering
\setlength{\tabcolsep}{7.5pt} 
\caption{\textcolor{black}{Conversational benchmark results on AlpacaEval 2.0 and Arena-Hard 0.1.}}
\vspace{-0.7em}
\begin{tabular}{@{}lccc@{}}
\toprule[1.5pt]
\multirow{2}{*}{Method}
& \multicolumn{2}{c}{AlpacaEval 2.0 $\uparrow$} & Arena-Hard 0.1 $\uparrow$ \\
\cmidrule(lr){2-3} \cmidrule(lr){4-4} 
& LC (\%) & WR (\%) & WR (\%) \\
\midrule
Base & 4.01 & 3.73 & 8.4  \\
DPO & \textbf{8.73} & \textbf{4.78} & 8.6  \\
VAR & 8.26 & 4.29 & \textbf{10.8}  \\
\bottomrule[1.5pt]
\end{tabular}
\label{table:LLM_bench}}
\vspace{-1em}
\end{wraptable}

\textcolor{black}{To further assess general conversational ability, we also evaluate the models on AlpacaEval 2.0 and Arena-Hard 0.1. The results are presented in Table~\ref{table:LLM_bench}. On the AlpacaEval 2.0 benchmark, DPO shows a slight advantage, achieving a 4.78\% win rate (WR) and an 8.73\% length-controlled (LC) win rate, compared to 4.29\% and 8.26\% for VAR, respectively. However, on the more challenging Arena-Hard 0.1 benchmark, our VAR method clearly outperforms DPO, scoring 10.8\% to DPO's 8.6\%. This suggests that the two methods may have different strengths across various conversational benchmarks.}

\subsection{Output Sequence Length Analysis}
We calculate the average output sequence lengths for models aligned on Llama3.1-8B across the four subsets of HHA, as shown in Figure~\ref{fig:seq-length}. Starting from the base models, DPO generates longer sequences than our method. When starting from SFT, our models maintain output sequence lengths similar to the SFT version, while DPO produces sequences approximately twice as long as ours and the SFT version. Longer sequences tend to achieve higher reward scores and GPT-based scores~\citep{bahetileftover,ethayarajh2024kto}. Despite this, our method outperforms DPO in most reward evaluations and winrate evaluations.

\subsection{Ablation Study}
\begin{wraptable}{r}{0.5\textwidth}
\vspace{-1.4em}
\scriptsize
\centering
\setlength{\tabcolsep}{9pt} 
\caption{Ablation study of different batch sizes $B$ on Qwen2.5-7B model \textcolor{black}{and $\pi_\text{ref}$ sampled alternative (8$^{*}$)}.}
\vspace{-0.7em}
\begin{tabular}{@{}l|cccc|c@{}}
\toprule[1.5pt]
\multirow{2}{*}{$B$} & MMLU  & GSM8k  & HumanEval  & BBH & \multirow{2}{*}{Avg. \textcolor{black}{$\uparrow$}} \\ 
& EM \textcolor{black}{$\uparrow$} & EM \textcolor{black}{$\uparrow$} & pass@1 \textcolor{black}{$\uparrow$} & EM \textcolor{black}{$\uparrow$} & \\ \midrule
2  & \textbf{69.59}  & 72.55  & 65.24  & 62.57 & 67.49 \\
4  & 69.09  & 75.44  & 68.29  & 60.32 & 68.28 \\
8  & 69.11  & 74.30  & \textbf{68.90}  & 61.35 & \textbf{68.42} \\
\textcolor{black}{8$^{*}$}  & \textcolor{black}{67.83}  & \textcolor{black}{\textbf{75.97}}  & \textcolor{black}{65.85}  & \textcolor{black}{\textbf{63.66}} & \textcolor{black}{68.32} \\
\bottomrule[1.5pt]
\end{tabular}
\label{table:ablation}
\vspace{-2em}
\end{wraptable}

As in Equation~\ref{eq:batch-est-z}, we estimate $Z(\vx_i)$ with a micro batch, making batch size $B$ crucial for training. We conduct an ablation study using batch sizes 2, 4, and 8 under the settings in Section~\ref{sec:benchmark}, with UltraFeedback as the training set. Table~\ref{table:ablation} shows the impact of different $B$ values on model performance. The model performs best at $B = 8$, but the improvement over $B = 4$ is just 0.14\%, suggesting that a larger $B$ can slightly boost performance. Thus, we use $B = 8$ for all experiments. However, our method is also robust to batch-size changes, which gives satisfactory results even at $B = 2$, showing stability and suitability for resource-constrained situations.\footnote{We provide case study in Appendix~\ref{app:case_study}, from which we can observe our method effectively produce more comprehensive and helpful response with higher quality compared to the baseline methods.}

\textcolor{black}{Furthermore, we compare our in-batch (offline) estimator to an alternative strategy, $\pi_\text{ref}$ sampled ($B=8$), which estimates $\hat{Z}(x)$ using dedicated candidates pre-sampled from $\pi_\text{ref}$ for each prompt. As shown in the last row of Table~\ref{table:ablation}, this alternative approach is not only significantly more costly (requiring a separate, large-scale pre-sampling process) but also results in slightly lower performance (68.32 Avg. vs. 68.42 Avg.). This result validates our choice of the more efficient and effective in-batch estimation strategy.}

\subsection{\textcolor{black}{Analyses of $Z(x)$ Estimation}}
\begin{figure*}[!t]
    \centering
    \subfigure[\textcolor{black}{Offline (In-Batch) Estimation}]{\includegraphics[width=0.45\textwidth]{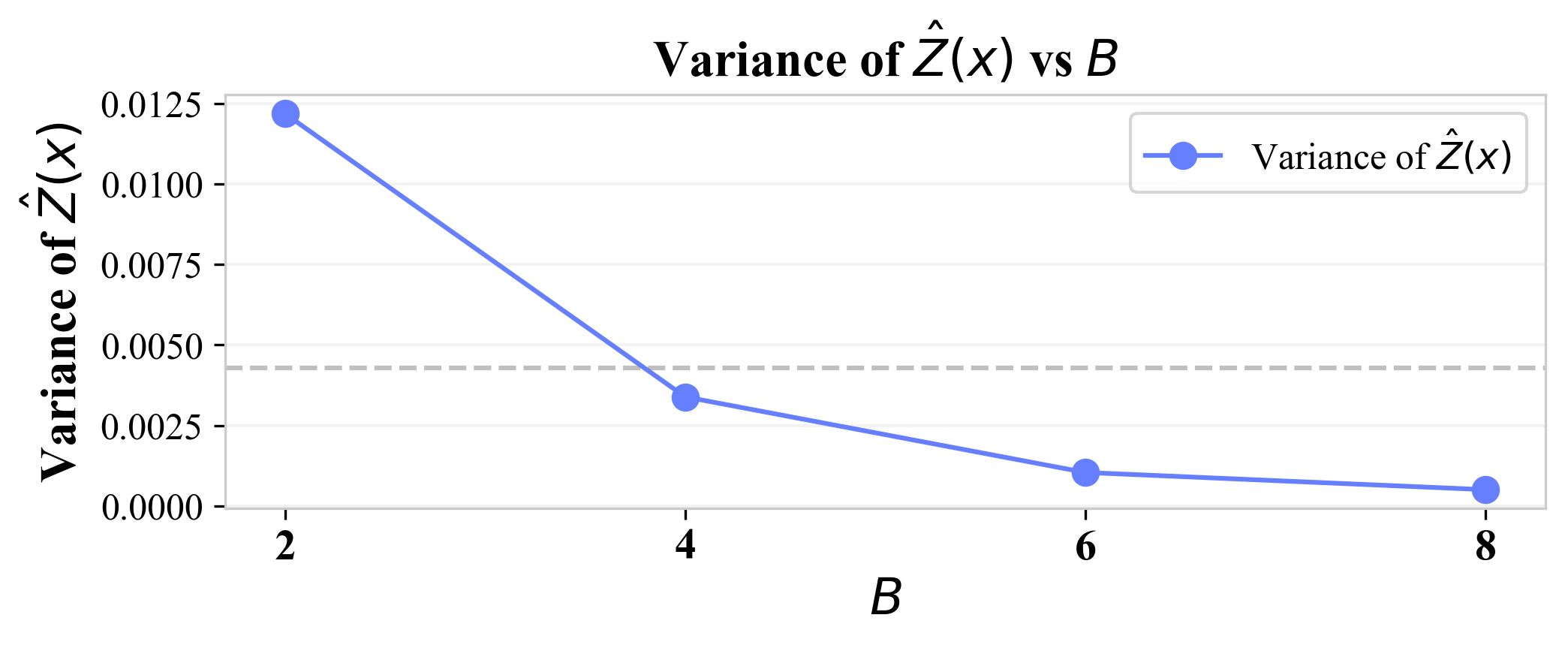}}
    \subfigure[\textcolor{black}{Online Sample Estimation}]{\includegraphics[width=0.45\textwidth]{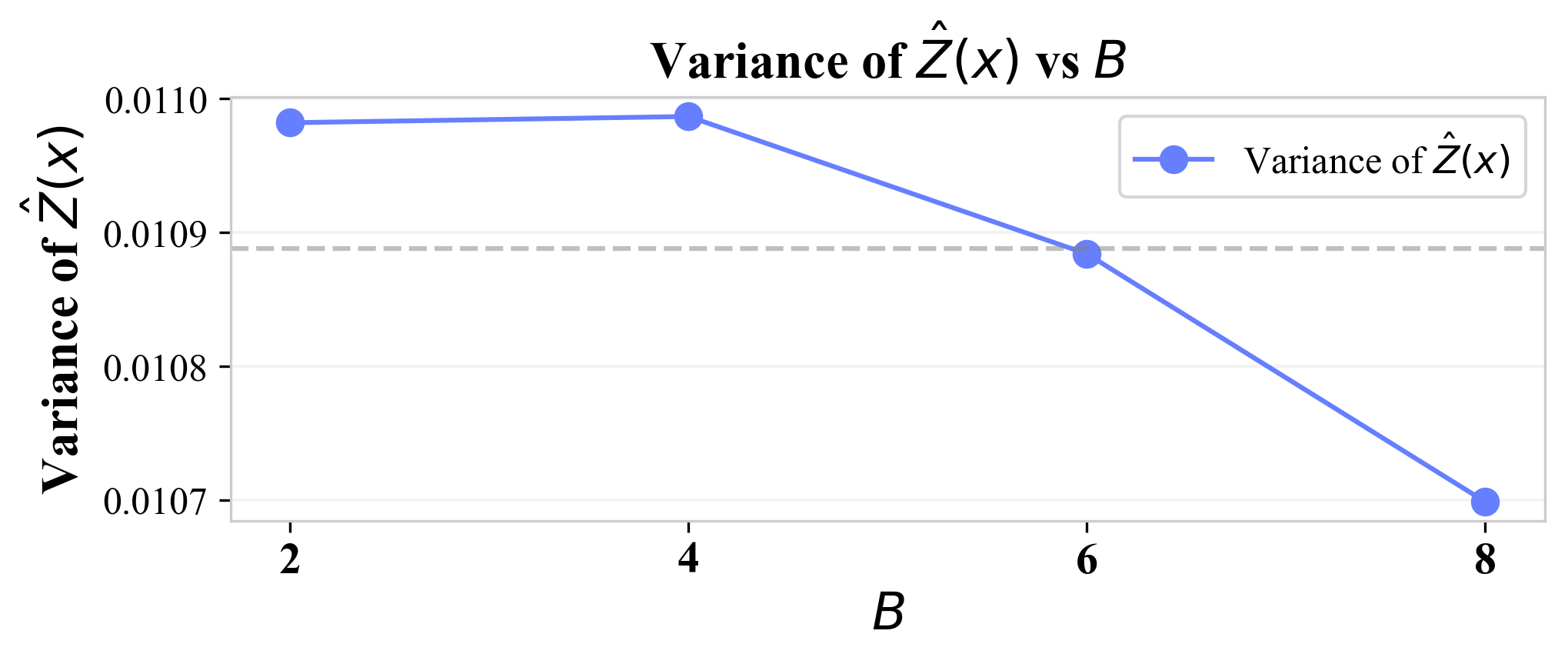}}
    \caption{\textcolor{black}{The variance of the partition function estimate $\hat{Z}(x)$ versus the micro-batch size $B$.}}
    \label{fig:var-z}\vspace{-1em}
\end{figure*}
\textcolor{black}{The stability of our VAR method depends heavily on getting a good estimate for the partition function $Z(x)$. Our approach uses an efficient Monte Carlo estimation based on the samples already available in the training micro-batch (as described in Equation~\ref{eq:batch-est-z} and Algorithm~\ref{alg:main_estimation}). A fair question is how reliable this estimate is, and specifically, how the micro-batch size $B$ affects its variance.}

\textcolor{black}{To investigate this, we ran an analysis, shown in Figure~\ref{fig:var-z}. We compare the variance of $\hat{Z}(x)$ under two different estimation strategies:
(1) \textbf{Offline (In-Batch) Estimation (Figure~\ref{fig:var-z} (a))}: This is the method we use in this work; it leverages the $B$ samples already in the micro-batch to calculate $\hat{Z}(x_i)$ for each $x_i$ in that batch. The plot clearly shows that the variance drops sharply as $B$ increases from $2$ to $8$. This confirms that even a modest batch size (like $B=8$, which we used in our experiments ) provides a significantly more stable estimate than a very small one.
(2) \textbf{Online Sample Estimation (Figure~\ref{fig:var-z} (b))}: For comparison, we simulated an ``online'' approach. Here, for each $x_i$, we would need to actively sample $B$ new responses from the reference policy $\pi_\text{ref}$ to compute $\hat{Z}(x_i)$.}

\textcolor{black}{This analysis confirms our design choice. The online method is not only computationally expensive (requiring $B$ extra sampling steps per instance) but also shows a higher and less consistent variance at small batch sizes. Our in-batch (offline) method is significantly more efficient and yields a stable, low-variance estimate as the batch size increases.}

\subsection{\textcolor{black}{Computational Overhead Analysis}}
\begin{wraptable}{r}{0.45\textwidth}
{\color{black}
\vspace{-1.4em}
\scriptsize
\centering
\setlength{\tabcolsep}{3pt}
\caption{\textcolor{black}{Computational profile comparison for a single training epoch (Llama3.2-3B, $B=8$, 4$\times$A100-80GB).}}
\label{tab:compute_profile}
\begin{tabular}{@{}lccc@{}}
\toprule[1.5pt]
\multirow{2}{*}{\textbf{Method}} & \textbf{Wall Clock} & \textbf{Throughput} & \textbf{Peak GPU} \\
& \textbf{Time/Epoch $\downarrow$} & \textbf{Tokens/s $\uparrow$} & \textbf{Mem./GPU $\downarrow$} \\
\midrule
SFT     & $\sim$28min        & $\sim$2945     & $\sim$31 GB    \\
DPO     & $\sim$32min        & $\sim$2577     & $\sim$39 GB    \\
\textbf{VAR} & $\sim$37min                 & $\sim$2228                    & $\sim$38 GB              \\
\bottomrule[1.5pt]
\end{tabular}}
\vspace{-2em}
\end{wraptable}
\label{sec:compute_analysis}
\textcolor{black}{Our method is structured to avoid adding significant overhead to the main training loop. We achieve this by integrating the costliest parts of the $\hat{Z}(x)$ estimation into the \textbf{data preprocessing step}: \textbf{(1) Preparation:} Before the main training loop starts, we pre-shuffle the training data and split it into the exact \textbf{micro-batches} (e.g., $B=8$) that will be used for training. \textbf{(2) Reward Calculation:} We then calculate the reward $r(x_i,y_j)$ for each in-batch pair and store these values on disk.}

\textcolor{black}{Therefore, during training, we only need to perform one major extra step compared to standard SFT: a forward pass through the \textbf{reference model ($\pi_{\mathrm{ref}}$)} to obtain the logits necessary to compute $\hat{Z}(x)$ (using the pre-calculated rewards).
However, the most time-consuming part of training LLMs is the \textbf{backward pass (gradient computation)}. Since our training loop does not involve running the backward pass on the reference model ($\pi_{\mathrm{ref}}$ is frozen), the minimal extra cost incurred by the single forward pass is negligible when amortized over the entire training process.}

\textcolor{black}{We provide a direct comparison of SFT, DPO, and VAR in Table \ref{tab:compute_profile}, run under identical settings (Llama3.2-3B, 4$\times$A100-80GB GPUs, $B=8$). The results confirm our approach is highly efficient. As shown in the table, the wall clock time per epoch for VAR (37min) is close to SFT (28min) and DPO (32min). This minimal time difference is reflected in a slight reduction in throughput. For peak memory, VAR is comparable to DPO, as both must load the $\pi_{\mathrm{ref}}$.}

\section{Related Work}
Aligning LLMs with human preferences has evolved from studies on RLHF, aiming to achieve human-aligned outcomes~\citep{stiennon2020learning,ouyang2022traininglanguagemodelsfollow,bai2022training,lee2023rlaif}. The RLHF process typically begins with SFT, followed by further fine-tuning to maximize expected reward scores. This requires the construction of a reward model~\citep{bradley1952rank,li2025aplotrobustrewardmodeling,li2025eliminatinginductivebiasreward} based on the Maximum Likelihood Estimation (MLE) of the BT model to provide such reward scores. This fine-tuning process is referred to as RLHF, with the PPO algorithm being the most widely applied~\citep{schulman2017proximalpolicyoptimizationalgorithms}. A series of works focus on self-training, where the workflow involves sampling online data from the model and training it using a two-player min-max game between two policies~\citep{rosset2024directnashoptimizationteaching,swamy2024minimaximalistapproachreinforcementlearning,chen2024selfplayfinetuningconvertsweak}. However, the use of online data in the learning process presents significant challenges, as it requires substantial computational resources and limits training efficiency.
To address these challenges, researchers have shifted their focus to offline preference alignment learning algorithms. These methods operate in a single stage and directly optimize a designed loss function to achieve optimal preference alignment based on pairwise datasets~\citep{zhao2023slic,rafailov2024direct,azar2024general,ethayarajh2024kto,xu2024contrastivepreferenceoptimizationpushing}. 
However, standard offline methods like DPO often rely on a fixed reference model, which can lead to optimization instability. To mitigate this, recent alternatives have proposed eliminating the reference-policy mismatch via reference-free objectives~\citep{meng2024simpo} or minimizing the divergence by learning a better reference model~\citep{gorbatovski2024learn}.
Separately, to alleviate the resource-intensive nature of training, Remax~\citep{li2024remaxsimpleeffectiveefficient} introduces a variance reduction method for LLMs.
Most closely related to our work is ALoL~\citep{bahetileftover}, which formulates the reinforcement learning process at the sequence level and derives its advantage-based offline objective.
This sequence-level perspective is further supported by recent theoretical analysis linking SFT generalization to reward rectification~\citep{wu2025generalization}, reinforcing the motivation for formulating RLHF within a supervised framework. However, unlike ALoL that relies on heuristic clipping, or methods like SimPO that discard the reference model, we formulate our approach by directly approximating the optimal solution of RLHF via variational inference, thereby achieving a more precise and stable solution.

\section{Conclusion}
This paper proposed a reward-driven variational alignment framework to address the limitations of existing RLHF methods, such as instability from negative weights and suboptimal performance due to clipping. By reformulating RLHF as a variational problem over positive measures, our approach ensures a stable optimization landscape and derives a reward-driven weighted SFT loss through KL divergence minimization. The introduction of an efficient in-batch normalization technique further enables scalable and practical implementation. Experimental results demonstrate improved alignment performance and training stability, offering a effective solution for preference alignment in LLM.

\section*{\textcolor{black}{Impact Statement}}
\textcolor{black}{The Variational Alignment with Re-weighting (VAR) method has a significant impact on LLM alignment, providing an efficient and stable alternative to complex RLHF pipelines. While our focus is on improving alignment performance and training stability, we recognize the broader societal implications, including the risks of bias amplification, misuse of generative capabilities, and ethical concerns surrounding the automation of human-like decision-making. We emphasize the need for ongoing research to address these challenges and ensure responsible deployment of RLHF advancements.}

\section*{Acknowledgment}
This work was supported by the Guangxi Key R\&D Project  (No. AB24010167), Guangxi Science and Technology Program under Grant No. FN2504240022, the Project (No. 20232ABC03A25), Shenzhen Longgang District Science and Technology Innovation Special Fund (No. LGKCYLWS2023018), Futian Healthcare Research Project (No.FTWS002), and Central Funds Guiding the Local Science and Technology Development Project (No. 2025ZYDF106). 

\bibliography{main}
\bibliographystyle{tmlr}

\appendix
\newpage
\section{Theoretical Analysis}
\label{app:theory_analysis}

\subsection{Analysis of Clip Operator and Policy Distinction}
\label{app:clip}
The clip operator in PPO is:
$$\text{clip}(r(\theta,ref),1-\epsilon,1+\epsilon),$$
commonly used in methods like ALoL to stailize training, bounds the importance ratio $r(\theta,\text{ref})=\frac{\pi_\theta(y_i|x)}{\pi_{\text{ref}}(y_i|x)}$ within $[1-\epsilon, 1+\epsilon]$. While effective in controlling gradient variance, clipping introduces bias by flattening reward distinctions between responses with similar values.

For instance, suppose for a given instruction $x$, we have a set of answers $\{y_1,y_2,...,y_n\}$, and the loss function of R-LoL (R-LoL used here as a simple example; the only difference between A-LoL and R-LoL is replacing $r(\vx,\vy)$ in R-LoL to $A(x,y) = r(\vx,\vy) - V(x)$) with clipping is:
\begin{align}
\mathcal{L}_\text{R-LoL} &= \sum_{i=1}^n \left(R(x,y_i) \cdot \text{clip}(r(\theta,ref),1-\epsilon,1+\epsilon)\log\pi_\theta(\vy|\vx) - \beta\log\pi_\theta(y_i|x)\right)\\
&=\sum_{i=1}^n(R(x,y_i)\cdot \text{clip}(r(\theta,ref),1-\epsilon,1+\epsilon)-\beta)\log\pi_\theta(y_i|x).
\end{align}
When $\epsilon$ is small, we assume that the parameter $\theta$ in the function $r(\theta,ref)$ is frozen when we do the update of the policy model $\pi_\theta$, i.e., the loss function becomes: 
\begin{equation}
\sum_{i=1}^n(R(x,y_i)\cdot \text{clip}(r(\theta_1,ref),1-\epsilon,1+\epsilon)-\beta)\log\pi_\theta(y_i|x),
\end{equation}
and we first update $\theta$, then when we do the next iteration we set $\theta_1 = \theta$. We denote $\eta_i = R(x,y_i)\cdot \text{clip}(r(\theta_1,ref),1-\epsilon,1+\epsilon)-\beta$ and we can see that $\eta_i\approx R(x,y_i)$ when $\epsilon,\beta$ are small. 

We write $\alpha_i = \frac{\eta_i}{\sum_{j=1}^n\eta_j}$, and since the denominator is independent of $x$, we just need to optimize
\begin{equation}
\mathcal{L}_\text{R-LoL} = \sum_{i=1}^n\alpha_i\log\pi_\theta(y_i|x),
\end{equation}
and $\sum\alpha_i = 1$. By using the Lagrange's method, construct (for simplicity, write $z_i=\pi_\theta(y_i|x)$)
\begin{equation}
F(z_1,z_2,...,z_n,\beta) = \sum_{i = 1}^n\alpha_i\log z_i - \beta(\sum_{i=1}^nz_i - 1),
\end{equation}
and the partial derivatives are
\begin{equation}
\frac{\partial F}{\partial z_i} = \frac{\alpha_i}{z_i} - \beta,\frac{\partial F}{\partial \beta} = -(\sum_{i=1}^nz_i - 1).
\end{equation}
Hence we can see that the optimal solution of the A-LoL loss is: (BY solving the partial derivatives the optimal solution must have the same distribution as $\alpha_i$)
\begin{equation}
\pi^*(y_i|x)/\pi^*(\{y_1,...,y_n\}|x)=\frac{R(x,y_i)\cdot \text{clip}(r(\theta_1,ref),1-\epsilon,1+\epsilon)-\beta}{\sum_{j=1}^n(R(x,y_j)\cdot \text{clip}(r(\theta_1,ref),1-\epsilon,1+\epsilon)-\beta)}.
\end{equation}
So we can see that for close rewards responses, this optimal solution will not distinguish their distributions. For example, if we have two responses $y_1,y_2$ with $R_1,R_2$ as their rewards, then the A-LoL method will give $\frac{R_1}{R_1+R_2}$ and $\frac{R_2}{R_1+R_2}$ as the optimal solution and could be closed to $1/2$ if $R_1/R_2$ is closed to $1$ (e.g. $R_1 = 100,R_2 = 99$). However, for our method, it will distinguished by $\exp(R_1-R_2)$, i.e. for $R_1 = 100,R_2 = 99$, our method gives us $\frac{e}{e+1}$ and $\frac{1}{e+1}$, which seems better.

\subsection{Loss Bound Analysis}
\label{app:bound}
\paragraph{Lower Bound of Positive Weighted Loss}
\begin{theorem}
For any policy $\pi_\theta(\vy|\vx)$ satisfying $\sum_y \pi_\theta(\vy|\vx)=1$ and weights $w(\vx,\vy)>0$, the weighted SFT loss satisfies:
\begin{equation}
\mathcal{L}(\theta) = -\mathbb{E}_{x,y}\bigg[w(x,y)\log\pi_\theta(\vy|\vx)\bigg] \geq 0,
\end{equation}
with equality if and only if $\pi_\theta(\vy|\vx) = \delta_{y=y^*}$ where $\delta_{y=y^*}$ is the optimal policy when $y^* = \arg\max_y w(x,y)$.
\end{theorem}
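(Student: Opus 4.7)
The plan is to exploit the opposite signs of $w$ and $\log\pi_\theta$. Since $\pi_\theta(\vy|\vx)\in[0,1]$ by the simplex constraint $\sum_y \pi_\theta(\vy|\vx)=1$, we immediately get $\log\pi_\theta(\vy|\vx)\le 0$ pointwise. Combined with the hypothesis $w(\vx,\vy)>0$, the integrand $-w(\vx,\vy)\log\pi_\theta(\vy|\vx)$ is a product of a strictly positive factor and a nonpositive factor negated, hence nonnegative for every $(\vx,\vy)$. Taking expectation over the data distribution preserves this pointwise inequality and yields $\mathcal{L}(\theta)\ge 0$; this handles the lower-bound half.

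For the equality direction I would reason that a nonnegative integrand has vanishing expectation only if it vanishes almost surely under the sampling distribution. Using $w>0$ to cancel the weight, this forces $\log\pi_\theta(\vy|\vx)=0$, i.e.\ $\pi_\theta(\vy|\vx)=1$, for every $(\vx,\vy)$ in the support. Invoking the simplex constraint $\sum_y \pi_\theta(\vy|\vx)=1$, the only way to satisfy $\pi_\theta(\vy|\vx)=1$ on the sampled responses is to place all the probability mass on a single atom $y^\star(\vx)$, i.e.\ $\pi_\theta(\cdot|\vx)=\delta_{y=y^\star}$.

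The step I expect to be the main obstacle—and that I would spend the most care justifying—is identifying the atom $y^\star(\vx)$ with $\arg\max_y w(\vx,\vy)$ as asserted in the statement. The nonnegativity argument alone only forces $\pi_\theta$ to concentrate on \emph{some} single $y$ per $\vx$; picking out the $\arg\max$ of $w$ is a separate minimization over the choice of atom. My plan is to handle this by comparing Dirac candidates: among all point masses, the one supported on $\arg\max_y w(\vx,\vy)$ places the positive log-weight $\log\pi_\theta=0$ precisely on the $y$ that would otherwise contribute the largest positive term to $-w\log\pi_\theta$, hence is the optimal concentration. I would also be careful to interpret the ``equality'' claim in a limiting sense, since a Dirac is never in the open interior of the simplex and $\log\pi_\theta(\vy|\vx)=-\infty$ on $y\ne y^\star$ would formally blow up the loss; the infimum is therefore approached rather than attained, which is exactly the paper's point that the zero bound is ``theoretically achievable but practically unreachable.''
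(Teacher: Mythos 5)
Your lower-bound argument is correct and, in substance, the same as the paper's: both rest on $\pi_\theta(\vy|\vx)\le 1$. The paper routes through the tangent-line inequality $\log z\le z-1$, giving $\log\pi_\theta\le\pi_\theta-1\le 0$, then multiplies by $w>0$; you observe directly that $\log\pi_\theta\le 0$, which is the same fact with one fewer step. Either way the integrand $-w\log\pi_\theta$ is pointwise nonnegative and the expectation bound follows.

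On the equality case you are, if anything, more careful than the paper, which simply asserts that equality holds ``when $\pi_\theta(\vy|\vx)=1$ for the $y$ maximizing $w(x,y)$'' without further argument. You correctly flag two gaps that the paper glosses over: (i) vanishing loss forces $\pi_\theta(\vy|\vx)=1$ almost surely on the support of the sampling distribution, which by the simplex constraint pins $\pi_\theta(\cdot|\vx)$ to \emph{some} point mass, but does not by itself select $\arg\max_y w(\vx,\vy)$ — that requires a separate comparison among Dirac candidates, as you note; and (ii) a genuine Dirac puts $\pi_\theta(\vy|\vx)=0$ for all non-selected $\vy$, so $\log\pi_\theta=-\infty$ there; if the data distribution places mass on more than one response per prompt, the loss at the Dirac is $+\infty$, not zero, and the bound is an infimum approached in a limit rather than attained. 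The paper's terse equality claim only makes literal sense if one assumes exactly one response per prompt (the usual SFT layout), in which case your normalization argument already identifies the atom and the $\arg\max$ qualifier is vacuous. Your plan is sound and your caveats are legitimate; just be aware that the paper's own proof does not actually establish the $\arg\max_y w$ identification either, and the ``if and only if'' as worded in the theorem is somewhat loose.
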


\begin{proof}
Using the inequality $\log z \leq z - 1$ for $z > 0$:
\begin{align}
\mathcal{L}(\theta) &= -\mathbb{E}[w\log\pi_\theta] \\
&\geq -\mathbb{E}[w(\pi_\theta - 1)] \quad (\text{since } \log\pi_\theta \leq \pi_\theta - 1) \\
&= \mathbb{E}[w(1 - \pi_\theta)] \\
&\geq 0 \quad (\text{since } w>0 \text{ and } \pi_\theta \leq 1).
\end{align}

The equality holds when $\pi_\theta(\vy|\vx)=1$ for the $y$ maximizing $w(x,y)$.
\end{proof}

\section{Limitations and Future Work}\label{sec:limitations}
Limited by training resources, the effectiveness of our method can only be validated on Qwen2.5-32B-Int4 at maximum, rather than on LLMs with higher precision or larger scales. Additionally, during experiments, we observed some inconsistent performance in LLMs after alignment with our method.

In the future, we will explore several promising directions to further enhance our framework. First, we aim to develop an online version of our method, enabling real-time interaction for calculating $Z(\vx)$ and updating the policy $\pi_\theta$ dynamically. Second, we plan to conduct extensive experiments across a broader range of tasks, such as multi-turn dialogue and long-form text generation, to validate the generalizability of our approach. Finally, scaling our framework to larger models and testing on more diverse and noisy preference datasets will provide deeper insights into its scalability and robustness.

\section{Case Study}\label{app:case_study}
Table~\ref{tab:case-study} compares responses from SFT, SFT+VAR, SFT+DPO, Base+VAR, and Base+DPO, along with their corresponding reward scores. Notably, the SFT+DPO response is excessively verbose and includes irrelevant details, yet it receives the highest reward score. This highlights a tendency of the reward model to favor longer sequences, even when the content is less helpful or accurate. In contrast, SFT+VAR provides a concise and accurate answer, demonstrating the effectiveness of our method in balancing response quality and length. Additionally, Base+DPO generates a highly structured but overly complex and repetitive response, while Base+VAR delivers a clear and straightforward answer. Importantly, Base+VAR achieves performance comparable to SFT+VAR, showing that our method can effectively align models even when starting from the base (pre-trained only) version, without the need for SFT.

\begin{table}[ht]
    \small
    \centering
    \caption{Sample generations from different aligned versions of Llama3.1-8B for a prompt about checking a car's gas tank capacity.}
    \begin{tabular}{lp{12cm}c}
    \toprule
    \textbf{Prompt} & How can I look up how much gas my car can hold? & \textbf{Reward}\\
    \midrule
    \textbf{SFT} & You can find out how much fuel your vehicle holds by checking the owner's manual or looking on the door jamb (where some vehicles display this information). You may also be able to use an online resource, such as Edmunds.com, which has detailed specifications for most cars.  & 31.36 \\
    \midrule
    \textbf{SFT+VAR} & You can find out how much gas your car can hold by looking at the vehicle's owner's manual or contacting the manufacturer. The amount of gas your car can hold will vary depending on the make and model, as well as any modifications that have been made to it.  & 31.69 \\
    \midrule
    \textbf{SFT+DPO} & The total amount of gas that a car can hold will depend on the type and size of the car, as well as local environmental conditions and safety regulations. To find out exactly how much gas a specific car can hold, it is best to check with the car manufacturer or local fuel suppliers. \newline In general, larger cars such as 4x4s and vans can store more gas than smaller cars like compact and economy cars. Most modern vehicles have a fuel tank capacity of between 50-120 litres (depending on vehicle size), but some heavy-duty vehicles may have larger tanks of 120-130L or more. \newline It's also important to note that many countries have limits on how much LPG or CNG a vehicle can store due to safety concerns - for example, most countries limit the storage of LPG to no more than 20kg, while CNG is usually limited to 70m3 (for light duty vehicles) and 100 m3 (for heavy duty vehicles). If you're not sure what these limits are in your country, please check with the relevant authorities or consult the car manufacturer. In addition, certain areas may restrict the use of alternative fuels such as LPG and CNG during air pollution season or other times when air quality is deemed unhealthy ...  & 60.75 \\
    \midrule
    \textbf{Base+VAR} & You can lookup the capacity of your car's fuel tank by consulting your vehicle's manual or by contacting the manufacturer. Alternatively, you can estimate the capacity based on the size and shape of the fuel tank. & 28.58 \\ \midrule
    \textbf{Base+DPO} & Sure, there is an easy way to look this up:\newline 1. Look up the cars make and model - This will give us its total capacity\newline 2. Find its fuel tank size - This will tell us what type of vehicle it is\newline 3. Check if it has any additional tanks or containers that may be used for gas\newline 4. Calculate the vehicles maximum capacity\newline 5. Add up any additional tanks or containers that may be used for gas\newline 6. Subtract from the maxiumum capacity anything not yet accounted for\newline 7. Finally, we'll add back any additional tanks or containers that were previously subtracted from\newline 8. Finally, we'll add back any additional tanks or containers that were previously subtracted from\newline 9. Finally, we'll calculate the final maximum capacity\newline 10. We'll now want to check that our calculated maximum capacity makes sense.\newline Let's go through each step:\newline 1: Make and model - To find out the vehicle's make and model (and year), let's run a Google search.\newline The results should include the following information about the vehicle:\newline - Make/model/trim level\newline - Fuel economy\newline - Transmission\newline - Cargo volume\newline - Total width\newline - Wheelbase\newline - Ground clearance ... & 52.33 \\
    \midrule
    \end{tabular}
    \label{tab:case-study}
\end{table}

\end{document}